\documentclass[leqno]{article}
\usepackage{amssymb,amsmath,amsfonts,amscd,verbatim}
\usepackage{color,graphicx,caption,subcaption}
\usepackage{algorithm,algorithmic}
\usepackage{upgreek,comment,url,comment}
\usepackage{float,units}
\usepackage{hyperref}
\usepackage{xcolor}
\usepackage[margin=1in]{geometry}
\graphicspath{ {Figures/} }

\input{./Definitions}

\usepackage[
backend=biber,
style=numeric,
]{biblatex}

\addbibresource{main.bib}

\title{\textbf{Stochastic Rounding Implicitly Regularizes Tall-and-Thin Matrices}}
\author{Gregory Dexter\footnote{Department of Computer Science, Purdue University, West Lafayette, IN 47907, USA, \texttt{\{gdexter, cboutsik, ma856, pdrineas\}@purdue.edu}} \and
Christos Boutsikas\footnotemark[1]\and
Linkai Ma\footnotemark[1]\and
Ilse C.F. Ipsen\footnote{Department of Mathematics, North Carolina State University, Raleigh, NC 27695-8205, USA,
\texttt{ipsen@ncsu.edu}}\and
Petros Drineas\footnotemark[1]
}

\begin{document}
\maketitle

\begin{abstract}
Motivated by the popularity of stochastic rounding in the context of machine learning  and the training of large-scale deep neural network models, 
we consider stochastic nearness rounding of real matrices
$\Ab$ with many more rows than columns.
We provide novel theoretical evidence, supported by extensive experimental evaluation that, with high probability, the smallest singular value 
of a stochastically rounded matrix is well 
bounded away from zero -- regardless of how close
$\Ab$ is to being rank deficient and even if $\Ab$ is rank-deficient. In other words, stochastic rounding  \textit{implicitly regularizes} tall and skinny matrices $\Ab$ so that the
rounded version has full column rank.
Our proofs leverage powerful results in random matrix theory, and the 
idea that stochastic rounding errors do not concentrate
in low-dimensional column spaces.
\end{abstract}

\section{Introduction}

Stochastic Rounding (SR), proposed over 70 years ago, is a probabilistic approach to rounding. According to~\cite{Croci2022}, the earliest proposal for SR appeared in a one-paragraph abstract of a communication presented by Forsythe~\cite{Forsythe1950} in 1950 at the 52nd meeting of the \textit{American Mathematical Society}, in the context of reducing the accumulation of round-off errors in solving systems of ordinary differential equations. Relatedly, the idea of modelling rounding errors as random variables to handle imprecise data in exact arithmetic goes back to 1949 and the work of von Neumann and Goldstine~\cite{von1947numerical}. 

Despite its illustrious beginnings, stochastic rounding has been largely overlooked by the numerical analysis community. Over the past few years, SR has enjoyed a resurgence in popularity, mainly due to the increasing interest for low-precision floating-point arithmetic in the context of machine learning applications and the training of large-scale deep neural network models. Currently, major chip designers own numerous SR-related patents, which seems to indicate that we might soon reach an inflection point for a wider adoption of SR in hardware and software. A non-exhaustive list includes  GraphCore IPUs that support stochastic rounding in binary32 and binary16~\cite{P1}; the Loihi Chip~\cite{davies2018loihi}; AMD~\cite{AMDpatent}; NVDIA~\cite{NVDIApatent}; IBM~\cite{IBM1patent, IBM2patent}; VIA Technologies~\cite{VIApatent}; DensBits Technologies~\cite{DensBitspatent}; and GSI Technology~\cite{GSIpatent}. A detailed discussion of the history of SR and probabilistic error analysis, as well as devices and patents for SR can be found in \cite{Croci2022}.

Recall that, given a number $x \in \mathbb{R}$ and a finite set of numbers $\Fcal \subset \R$, rounding refers to the process of matching $x$ to a number $\tilde{x} \in \Fcal$. This can be done deterministically or \textit{stochastically}: a common modality for stochastic rounding is to round with a probability that depends on the distance of $x$ from the two 
points that enclose it in $\Fcal$. 
For example, if $\Fcal = \{0,1\}$, the value $x=0.7$ is rounded to $\tilde{x}=1$ with probability .7 and to $\tilde{x}=0$ with probability .3.

This version of SR is sometimes called \textit{SR-nearness} or \textit{mode-1 SR}, and is an unbiased estimator. Of particular interest is the case where $\Fcal$ is the set of \textit{normalized floating point} numbers.

\subsection{Our results} Let 
$\Ab\in\R^{n\times d}$ be a tall-and-thin matrix with $n\gg d$. We present novel theoretical evidence, supported by extensive experimental evaluation, that guarantees with high probability that, after SR, the smallest singular value of the rounded matrix \textit{is bounded away from zero}. This holds regardless of how close to rank-deficient $\Ab$ might be, or even if $\Ab$ is rank-deficient, assuming that the rounding process has access to enough randomness (see eqn.~\eqref{eq:maineq} for a precise definition). If the stochastically rounded $\Ab$ were to be used in a downstream regression or classification problem, this is akin to saying that SR~\textit{implicitly regularizes}~$\Ab$. Such regularization effects are often beneficial in downstream machine learning algorithms and, in particular, in training Deep Neural Network (DNN) models and Large Language Models (LLMs)~\cite{gupta2015deep,wang2018training}. Thus, SR could serve as an implicit regularizer in modern machine learning applications, and might bypass the need for explicit regularization. The references in \cite[Section 8.2]{Croci2022} point to a long list of machine learning applications that could benefit from properties of SR.

To give a taste of our results, let $\Fcal$ be the set of normalized floating point numbers. Applying SR entry-wise to
$\Ab\in\R^{n\times d}$ gives the stochastically rounded
version $\Abtil \in \Fcal^{n \times d}$. 

%
%
Our main result, Theorem~\ref{lemma:general_rounding}, combined with (\ref{eqn:pdrcal1}) proves a lower bound for the smallest singular value of $\Abtil$. 
Formally, let $\sigma_d(\cdot)$ denote the smallest singular value of a $n\times d$ matrix where $n\geq d$; let $\beta$ be the basis and $p$ be the working precision of the floating point representation; and assume for simplicity for exposition that all entries of $\Ab$ are in the interval $[-1,1]$. (See Section~\ref{section:main_result} for the general case where the entries of $\Ab$ are arbitrary.) We prove that SR guarantees, with high probability, the following absolute bound for the smallest singular value of the stochastically rounded~$\Ab$,
\begin{align}\label{eq:maineq}
    \sigma_d(\Abtil) \geq \beta^{1-p}\sqrt{n}\left(\sqrt{\nu} - \varepsilon_{n,d}\right).
\end{align}
Here $0\leq \nu\leq 1$ is the \textit{minimum normalized variance} of the stochastic rounding process over all columns of $\Ab$ (defined in Section~\ref{section:main_result}), and $\epsilon_{n,d}$ captures \textit{lower-order} terms that depend only on the dimensions $n$ and $d$ of~$\Ab$.

We discuss the parameters to interpret the above bound. 
\begin{enumerate}
\item As the `tall' dimension $n$ of the matrices $\Ab$ and $\Abtil$ grows, the smallest singular value of the rounded matrix 
$\Abtil$ increases. This is because the columns of~$\Abtil$ have more opportunity to be linearly independent, as they become longer.

\item The parameter $\nu$, formally defined in Section~\ref{section:main_result}, captures the stochasticity affecting the rounding of the entries of $\Ab$.  

To intuitively understand the importance of~$\nu$, consider the special case when~$\Ab$ consists of two identical columns whose entries are elements of $\Fcal$. Any rounding process, including SR, keeps the matrix intact, so that $\Abtil=\Ab$ has two identical columns. Therefore, the smallest singular values of $\Ab$ and $\Abtil$ are equal to zero, since both matrices are rank-deficient. In that case, $\nu=0$, since there is no flexibility in the rounding process. 

Indeed, SR is most powerful when the two points in $\Fcal$ that enclose the entry to be rounded have meaningful probabilities associated with them. 

\item The parameter $\epsilon_{n,d}$ captures \textit{lower-order} terms that depend only on the dimensions $n$ and $d$ of the input matrix. Corollary~\ref{cor:general_rounding} states that if 
$\Ab\in\R^{n\times d}$ is sufficiently tall and thin,
that is $d = \smallO(\left(\nicefrac{n}{\log{n}}\right)^{\nicefrac{1}{4}})$, then
\begin{align*}
    \lim_{n \rightarrow \infty}\varepsilon_{n,d} = 0. 
\end{align*}
Thus, we can drop $\varepsilon_{n,d}$ from the bound (\ref{eq:maineq}), which gives
\begin{align}\label{eq:maineqapprox}
    \sigma_d(\Abtil) \gtrsim \beta^{1-p}\sqrt{n \nu}.
\end{align}
%
This estimate is strongly supported by our empirical evaluations in Section~\ref{exps}, 
textit{and} essentially matches our lower bound in Section~\ref{s_lower}. We conjecture that~(\ref{eq:maineqapprox}) characterizes the true behavior of SR on essentially all tall-and-thin matrices.  
\end{enumerate}
%
%
%

An interesting aspect of (\ref{eq:maineq}) and~(\ref{eq:maineqapprox}) is that they \textit{do not} depend on the closeness of $\Ab$ to rank-deficiency. Thus, SR guarantees that the stochastically rounded matrix~$\Abtil$ invariably has its smallest singular value bounded away from zero, thus has full column rank.


Our proof techniques build upon results from Random Matrix Theory (RMT). Of particular importance is Theorem~\ref{thm:inhomogeneous_anticoncentration}, which first appeared in \cite{dumitriu2022extreme} and bounds the smallest singular value of matrices whose entries are independent but not identically distributed random variables. Our proof first decomposes the matrix of rounding errors into two components, and then bounds the smallest singular value of the first component via RMT, and 
the norm of the second component with a scalar
concentration inequality. The idea is that there is no concentration of error in low-dimensional subspaces. 

The paper is organized as follows: Section~\ref{sec_back} presents notation and basic background, including stochastic rounding; Section~\ref{s_prior} discusses prior work; Section~\ref{s_bounds} presents and discusses our bounds; Section~\ref{exps} presents our experimental evaluations; Section~\ref{section:future_work} discusses future research directions;
and Appendix~\ref{app_G} presents a singular value bound for Gaussian perturbations.

\section{Background} \label{sec_back}
We define notation in Section~\ref{sec_not}; review stochastic rounding in 
Section~\ref{s_setup}; bound the deviation of
a sum random variables from its expectation in Section~\ref{s_Hoeffding};
and recall the union bound for probabilities in Section~\ref{sec_union}.

\subsection{Notation}\label{sec_not}

We use bold uppercase letters to denote matrices and bold lower-case letters to denote vectors. 
The singular values of a matrix $\Ab \in \mathbb{R}^{n \times d}$ with $n \geq d$ are denoted by $\sigma_1(\Ab)\geq \cdots \geq\sigma_d(\Ab)\geq 0$. We use standard notation for matrix and vector norms,
and denote the natural logarithm of $n$ by $\log n$.

The expectation of  a random variable $X$ 
is denoted by $\EE[X]$ and its variance by $\Var[X]$. 
The probability of an event $\cal E$ is $0\leq \Pr[{\cal E}] \leq 1$. The overbar in $\bar{\cal E}$ represents the complement of 
the event~$\cal E$. Recall that $\Pr[{\cal E}] \leq p$ is equivalent
to $\Pr[\bar{\cal E}] \geq 1-p$.

%
The statement $f(n) = \smallO(g(n))$ means $$\lim_{n \rightarrow\infty} \nicefrac{f(n)}{g(n)} =0.$$ 

\subsection{Stochastic rounding and its properties}\label{s_setup}

We review the stochastic rounding model in~\cite{Arar2022}. 
%
%
%
%

Let $\Fcal \subset \R$ be a fixed, finite set of numbers. 
For a number real $x \in [\min \Fcal, \max \Fcal]$, we
represent the enclosing numbers in $\Fcal$ by
\begin{align}\label{eqn:floorceil}
\nceil{x} = \min \{y \in \Fcal: y \geq x\} \quad \text{and}\quad \nfloor{x} = \max \{y \in \Fcal: y \leq x\}.    
\end{align}
For $\nceil{x}\neq \nfloor{x}$, \textit{SR-nearness} of $x \in \R$ is defined as 
\begin{align*}
\roundfunc(x) = \begin{cases}\nceil{x}& 
\text{with probability}\  \frac{x - \nfloor{x}}{\nceil{x} - \nfloor{x}},\\ 
\nfloor{x} & \text{otherwise}.
\end{cases}
\end{align*}
If $\nfloor{x} = \nceil{x}$, then $\roundfunc(x) = x$. SR-nearness produces an unbiased estimator of $x$, namely
\begin{align}
\EE[\roundfunc(x)] = x.
\end{align}
The generalization of SR-nearness to matrices is immediate. If $\Ab \in \R^{n \times d}$, then SR-nearness produces the random matrix $\Abtil \in \R^{n \times d}$ with elements 
\begin{align}
 \Abtil_{ij} = \roundfunc(\Ab_{ij}), \qquad 1\leq i\leq n, \ 1\leq j\leq d.
\end{align}
The random matrix of absolute SR rounding errors is 
\begin{align}
\Eb \equiv \Abtil - \Ab.
\end{align}
The entries of $\Eb$ are independent random variables, and the 
matrix-valued expectation equals $\EE[\Eb] = \zero$. 

\subsection{Normalized floating point numbers} 
In the context of a basis $\beta$ and working precision $p$,
a \textit{normalized} floating point number $x$ can be 
uniquely represented as \cite{8766229}
\begin{align*}
 x = s \cdot m\cdot\beta^{e-p},   
\end{align*}
where $s = \pm 1$ is the sign, $e$ is the exponent, and the \textit{significand} $m$ is an integer in the interval 
$$\beta^{p-1}\leq m < \beta^p.$$ 

With $\Fcal$ representing the set of normalized floating point numbers, the rounding model in Section~\ref{s_setup} 
is exactly the SR-nearness model  from~\cite{Arar2022}, and
as in~\cite{Arar2022}, we ignore numerical overflow and underflow.

Suppose the real number $x \in [\min \Fcal, \max \Fcal]$
is not an element of $\Fcal$. According to ~(\ref{eqn:floorceil}),
$x$ is enclosed by the successive floating point numbers $\nfloor{x}$ and $\nceil{x}$.
%
%
As in deterministic floating point arithmetic,
the SR floating point version 
is either $\nfloor{x}$ or $\nceil{x}$, with an error of\footnote{Compared to SR-nearness, 
the bound for the 
IEEE-754 RN mode (round-to-nearest, ties to even) is tighter by a factor of $\nicefrac{1}{2}$.}
\begin{align}\label{eqn:pd51a}
    \max\{|x-\nfloor{x}|,|x-\nceil{x}|\} \leq \beta^{1-p} |x|.
\end{align}
The absolute distance between the two enclosing floating numbers
equals~\cite[Section II.A, Figure 1]{Arar2022}
\begin{align}\label{eqn:pd52a}
    \nceil{x} - \nfloor{x} = \beta^{e-p}.
\end{align}

\subsection{Bounding the sum of random variables}\label{s_Hoeffding}
The following well-known inequality by W. Hoeffding bounds the deviation of a sum of $n$ independent random variables from its expectation. 
The slight restatement below is adapted to our context.
\begin{theorem}[Theorem 2 in \cite{Hoeff63}]\label{thm:hoeffding}
    Let $X_1,\ldots,X_n$ be independent random variables with $m_i\leq X_i \leq M_i$, $1\leq i\leq n$. Then, for any $t > 0$, 
    \begin{align*}
        \PP\left(\Big|\sum_{i=1}^n (X_i - \EE[X_i])\Big| \geq t\right)
        \leq 2\exp\left(\frac{-2t^2}{\sum_{i=1}^n (M_i - m_i)^2}\right).
    \end{align*}
\end{theorem}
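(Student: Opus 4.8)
The plan is to prove this via the Cramér--Chernoff (exponential moment) method together with Hoeffding's lemma on the moment generating function of a bounded, centered random variable. Fix $t > 0$ and write $S = \sum_{i=1}^n (X_i - \EE[X_i])$. For any $s > 0$, Markov's inequality applied to $e^{sS}$ together with independence of the $X_i$ gives
\begin{align*}
\PP(S \geq t) \;\leq\; e^{-st}\,\EE\!\left[e^{sS}\right] \;=\; e^{-st}\prod_{i=1}^n \EE\!\left[e^{s(X_i - \EE[X_i])}\right].
\end{align*}
The key ingredient is Hoeffding's lemma: if $Y$ satisfies $\EE[Y] = 0$ and $a \leq Y \leq b$ almost surely, then $\EE[e^{sY}] \leq \exp\!\big(s^2(b-a)^2/8\big)$ for every $s$. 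First I would prove this by using convexity of $y \mapsto e^{sy}$ on $[a,b]$ to get the pointwise bound $e^{sy} \leq \frac{b-y}{b-a}e^{sa} + \frac{y-a}{b-a}e^{sb}$; taking expectations and using $\EE[Y]=0$ yields $\EE[e^{sY}] \leq e^{\varphi(u)}$ with $u = s(b-a)$, $\theta = -a/(b-a) \in [0,1]$, and $\varphi(u) = -\theta u + \log(1-\theta+\theta e^u)$. One checks $\varphi(0) = \varphi'(0) = 0$ and $\varphi''(u) = g(u)\big(1-g(u)\big)$ where $g(u) = \theta e^u/(1-\theta+\theta e^u) \in (0,1)$, so $\varphi''(u) \leq 1/4$ uniformly; Taylor's theorem with remainder then gives $\varphi(u) \leq u^2/8$, which is the claim.

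Next I would apply Hoeffding's lemma with $Y = X_i - \EE[X_i]$, $a = m_i - \EE[X_i]$, $b = M_i - \EE[X_i]$, so that $b - a = M_i - m_i$, to obtain
\begin{align*}
\PP(S \geq t) \;\leq\; \exp\!\Big(-st + \tfrac{s^2}{8}\sum_{i=1}^n (M_i - m_i)^2\Big).
\end{align*}
Minimizing the exponent over $s > 0$ gives the optimal choice $s = 4t/\sum_{i=1}^n (M_i - m_i)^2$ and the one-sided bound $\PP(S \geq t) \leq \exp\!\big(-2t^2/\sum_{i=1}^n (M_i - m_i)^2\big)$. Finally, the same argument applied to $-S$ --- equivalently, replacing each $X_i$ by $-X_i$, which preserves the hypotheses with the roles of $m_i$ and $M_i$ swapped and negated --- bounds $\PP(S \leq -t)$ by the identical quantity, and the union bound $\PP(|S| \geq t) \leq \PP(S \geq t) + \PP(S \leq -t)$ produces the factor $2$ in the statement.

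The hard part will be Hoeffding's lemma, and within it the uniform bound $\varphi''(u) \leq 1/4$: this is precisely what delivers the sharp constant $1/8$ in the exponent of the moment generating function bound, and hence the clean constants in the final inequality. Everything else --- Markov's inequality, the factorization of the moment generating function under independence, and the scalar optimization over $s$ --- is routine.
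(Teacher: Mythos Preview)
Your proof is the standard and correct derivation of Hoeffding's inequality via the Cram\'er--Chernoff method and Hoeffding's lemma. Note, however, that the paper does not actually prove this statement: it is quoted verbatim as Theorem~2 of Hoeffding's 1963 paper and used as a black box, so there is no ``paper's own proof'' to compare against. Your argument is exactly the classical one and would serve perfectly well if a self-contained proof were desired.
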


\subsection{The union bound}\label{sec_union}

The following union bound 
states that the probability of the union of 
$k$ events ${\cal E}_i$, $1\leq i \leq k$, 
is bounded above by the sum of the individual probabilities, i.e., 
\begin{align*}
   \Pr[{\cal E}_1 \cup \cdots \cup{\cal E}_k] \leq \sum_{i=1}^k \Pr[{\cal E}_i]. 
\end{align*}

%


\section{Prior work}\label{s_prior}
%
%
There exists a large body of prior work discussing the implementation and error analysis of stochastic rounding in the context of floating point arithmetic. In the 1950s, Forsythe~\cite{forsythe1959reprint} modeled round-off errors as random variables. A few years later, Hull and Swenson~\cite{hull1966tests} presented probabilistic models for round-off errors and concluded that such models are, in general, very good in theory and in practice. The focus of our work is the SR-nearness mode, which first appeared in~\cite{parker1997monte}, while the SR-up-or-down mode was analyzed in~\cite{vignes2004discrete}. Other work analyzed SR for the heat equation~\cite{croci2023effects}; proved that SR prevents stagnation~\cite{connolly2021stochastic}; analyzed SR for floating point summation~\cite{hallman2023precision}; and
proposed alternative frameworks to characterize SR errors for sequential summation and Horner's method for evaluating polynomials based on the computation of the variance and Chebyshev's inequality instead of martingales~\cite{Arar2023}. Software emulators of SR include Verificarlo~\cite{denis2015verificarlo}, Verrou~\cite{fevotte2016verrou}, and Cadna \cite{jezequel2008cadna}. Finally,~\cite{Croci2022} presents a survey of error analysis and applications of SR including a more general analysis of SR errors that are not necessarily independent, but only weakly independent.

Our contribution is to demonstrate that SR rounding tends to increase the smallest singular value of tall-and-thin matrices, thus performing implicit regularization when these matrices are used in downstream machine learning and data analysis applications. 
Our work was partially motivated by~\cite{Sankar2006}, where Gaussian elimination without pivoting 
is analyzed in the smoothed complexity model, and the smallest singular value of $\Ab+\Eb$ is bounded from below, 
for a matrix~$\Eb$ whose entries are independent, identically distributed Gaussian random variables. While the proof techniques of~\cite{Sankar2006} are not directly portable to our setting, the motivation is somewhat similar.

Finally, our own prior work~\cite{Boutsikas2024} demonstrates both theoretically and experimentally that perturbing a real matrix $\Ab$ of full column rank can potentially increase the smallest singular values under certain assumptions involving singular
value gaps, thus establishing a qualitative model for the increase in  small singular values after a matrix has been downcast to a lower arithmetic precision. However, the bounds in~\cite{Boutsikas2024} have a different flavor and are not directly comparable the bounds here.

\section{Bounding the smallest singular value of the perturbed matrix}\label{s_bounds}

We present a simple example for the effect of SR-nearness on the smallest singular value in 
Section~\ref{sxn:simpleexample};
derive a singular value bound 
in Section~\ref{section:main_result} for general random errors and in Section~\ref{s_floats} for SR-nearness errors; review the RMT result for our proof 
in Section~\ref{sec_RMT}; and finally present the proof of
Theorem~\ref{lemma:general_rounding}
in Section~\ref{sec_proof} and its tightness
in Section~\ref{s_lower}.

The goal is to quantify how SR-nearness applied to
a tall and skinny matrix $\Ab \in \R^{n \times d}$ with $n \gg d$ increases its smallest singular value. This is in contrast to deterministic rounding, which 
can drive the smallest singular value  to zero. 

A first approach for bounding the smallest singular value $\sigma_d(\Ab+\Eb)$ of the rounded matrix $\Ab+\Eb$
might rely on Weyl's inequality,
\begin{align}\label{eqn:pd0}
\sigma_d(\Ab + \Eb) \geq \sigma_d(\Ab) - \|\Eb\|_2.
\end{align}
However, (\ref{eqn:pd0}) is not informative if $\sigma_d(\Ab)=0$. More generally, (\ref{eqn:pd0})
produces positive bounds only if the smallest singular value of $\Ab$ is larger than $\|\Eb\|_2$. 

This is the reason why we are exploring lower bounds that \textit{do not} depend on the smallest singular value of $\Ab$. We argue that, under appropriate assumptions, the smallest singular value of the rounded matrix $\Ab+\Eb$ cannot be too small, and that its value does not depend on the singular values of the original matrix $\Ab$. 

To this end, we employ Random Matrix Theory (RMT) to derive bounds that \emph{only depend on the distribution of} the entries of $\Eb$. Our lower bounds for 
$\sigma_d(\Ab+\Eb)$ can be positive 
even if $\sigma_d(\Ab) = 0$. Therefore, SR-nearness can \emph{increase} the smallest singular value even in the extreme case of rank-deficient matrices

\subsection{A simple example}\label{sxn:simpleexample}
We illustrate the effect of stochastic rounding on the smallest singular value of $\Ab$, in the special case where $\Ab \in \R^{n \times 2}$ is a rank-one matrix, all of whose entries are equal to $\nicefrac{1}{2}$. 
Suppose we want to round $\Ab$ so as to represent each entry in terms of a single bit, i.e., ${\cal F} = \{0,1\}$. 

A deterministic model that rounds $\nicefrac{1}{2}$ to one
produces a rounded matrix that is rank deficient as well. 
In contrast, SR-nearness sets each entry of $\Abtil$ to zero or one with equal probability. Hence $\rank(\Abtil)=1$ only if the two columns of $\Abtil$ are identical -- an event whose probability becomes exponentially small as $n$ increases. Here is an example of what SR-nearness may look like for an $8 \times 2$ matrix:
\begin{align*}
    \Ab = 
    \begin{bmatrix}
        $\nicefrac{1}{2}$ & $\nicefrac{1}{2}$ \\
        $\nicefrac{1}{2}$ & $\nicefrac{1}{2}$\\
        $\nicefrac{1}{2}$ & $\nicefrac{1}{2}$\\
        $\nicefrac{1}{2}$ & $\nicefrac{1}{2}$\\
        $\nicefrac{1}{2}$ & $\nicefrac{1}{2}$\\
        $\nicefrac{1}{2}$ & $\nicefrac{1}{2}$\\
        $\nicefrac{1}{2}$ & $\nicefrac{1}{2}$\\        
        $\nicefrac{1}{2}$ & $\nicefrac{1}{2}$
    \end{bmatrix}
    \qquad
    \Abtil = 
    \begin{bmatrix}
        1 & 0 \\
        1 & 0\\
        0 & 1\\
        1 & 0\\
        1 & 1\\
        1 & 1\\
        0 & 0\\
        1 & 0\\
    \end{bmatrix}.
\end{align*}
In Appendix~\ref{sxn:app:extra} we prove that the smallest singular value of the $n\times 2$ matrix $\Abtil$ satisfies the following lower bound, with probability at least $0.997$:
\begin{align}
 \sigma_2^2(\Abtil) \geq 0.25 \cdot n - 8\sqrt{n} = \Omega(n) \Rightarrow \sigma_2(\Abtil) \geq \Omega(\sqrt{n}).
\end{align}
%
This illustrates that SR-nearness is highly likely to produce a significant increase in the smallest singular value.





\subsection{General random perturbation} \label{section:main_result}
Our main result in Theorem~\ref{lemma:general_rounding}
bounds the smallest singular value of the rounded matrix $\Abtil=\Ab+\Eb \in \mathbb{R}^{n \times d}$ away from zero, provided: 
(i) there is enough randomness in the perturbation $\Eb$, and (ii) $\Ab$ is sufficiently tall and thin. 

We define the minimum normalized column-wise variance 
$\nu$ of $\Eb$ as follows:
\begin{align}\label{eqn:nupd}
\nu \equiv \frac{1}{n\Rcal^2}\,\min_{1\leq j\leq d}\,\sum_{i=1}^n  \Var\left(\Eb_{ij}\right)\quad \mbox{with}\quad 
\max_{i,j}|\Eb_{ij}|\leq \Rcal.
\end{align}
Clearly, $\Var(\Eb_{ij}) = \Var(\Abtil_{ij})$, which implies that  
$\Var\left(\Eb_{ij}\right) \leq \Rcal^2$. Therefore, 
\begin{align*}
   0\leq \nu\leq 1. 
\end{align*}
%
%
Intuitively, $\nu$ characterizes the amount of randomness in SR-nearness. Our main theorem 
below depends on the minimal column-wise variance of the perturbation $\Eb$.

\begin{theorem}\label{lemma:general_rounding}
Let $\Ab$ and $\Abtil= \Ab+\Eb$
be real $n\times d$ matrices with $n\gg d$.
Here~$\Eb$ models random perturbations with minimal normalized column variance $\nu$ and $\Rcal$ from~(\ref{eqn:nupd}). 

If $n \geq 836$, then
%
with probability at least $1 - \frac{1}{n^c} - \frac{2d^2}{n^2}$, 
\begin{align*}
    \sigma_d(\Abtil) \geq \Rcal\sqrt{n}(\sqrt{\nu} - \varepsilon_{n,d}),
\end{align*}
where
\begin{align}\label{eqn:epsilonnd}
\varepsilon_{n,d} \equiv \sqrt{\frac{d}{n}} + 2d^2\sqrt{\frac{\log n}{n}} + \frac{C(\log n)^{2/3}}{n^{1/30}} \cdot \left(\frac{d}{n}\right)^{\nicefrac{1}{54}},
\end{align}
and $c$ and $C$ are absolute constants\footnote{These constants are unspecified in \cite[Theorem 2.10 and Remark 2.11]{dumitriu2022extreme}.}.
\end{theorem}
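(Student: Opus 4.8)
The plan is to avoid the naive Weyl bound~\eqref{eqn:pd0}, which is vacuous when $\sigma_d(\Ab)$ is small, by first projecting away the (at most $d$-dimensional) column space of $\Ab$. Let $\Pb$ be the orthogonal projector onto the column space of $\Ab$, so that $\rank(\Pb)\le d$ and $(\Ib-\Pb)\Ab=\zero$. For any unit vector $\xb$ one has $\|(\Ab+\Eb)\xb\|_2\ge\|(\Ib-\Pb)(\Ab+\Eb)\xb\|_2=\|(\Ib-\Pb)\Eb\,\xb\|_2$; minimising over $\xb$ and then applying Weyl's inequality to the splitting $\Eb=(\Ib-\Pb)\Eb+\Pb\Eb$ gives the purely deterministic chain
\begin{align*}
\sigma_d(\Abtil)=\sigma_d(\Ab+\Eb)\;\ge\;\sigma_d\bigl((\Ib-\Pb)\Eb\bigr)\;\ge\;\sigma_d(\Eb)-\|\Pb\Eb\|_2.
\end{align*}
It then suffices to lower bound $\sigma_d(\Eb)$ and upper bound $\|\Pb\Eb\|_2$ --- the two components of the error matrix referred to in the introduction --- using, respectively, random matrix theory and a scalar concentration inequality. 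The intuition behind the second bound is that the rounding errors in $\Eb$ do not concentrate in the low-dimensional column space of $\Ab$.

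For $\sigma_d(\Eb)$ I would invoke the inhomogeneous anti-concentration bound, Theorem~\ref{thm:inhomogeneous_anticoncentration}: the entries of $\Eb$ are independent and mean-zero (Section~\ref{s_setup}), with variance profile satisfying $\Var(\Eb_{ij})\le\Rcal^2$ and, by the definition~\eqref{eqn:nupd} of $\nu$, column sums $\sum_{i}\Var(\Eb_{ij})\ge n\Rcal^2\nu$. The leading term of that theorem is $\sqrt{\min_j\sum_i\Var(\Eb_{ij})}=\Rcal\sqrt{n\nu}$; bounding its profile-dependent error terms via $\Var(\Eb_{ij})\le\Rcal^2$ yields, with probability at least $1-n^{-c}$ and provided $n\ge 836$,
\begin{align*}
\sigma_d(\Eb)\;\ge\;\Rcal\sqrt{n}\left(\sqrt{\nu}-\sqrt{\tfrac{d}{n}}-\frac{C(\log n)^{2/3}}{n^{1/30}}\left(\tfrac{d}{n}\right)^{\nicefrac{1}{54}}\right),
\end{align*}
which accounts for the first and third terms of $\varepsilon_{n,d}$ in~\eqref{eqn:epsilonnd}. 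This is where the absolute threshold $n\ge 836$ and the absolute constants $c,C$ enter.

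For $\|\Pb\Eb\|_2$ I would write $\Pb=\Ub\Ub^\ts$ with $\Ub\in\R^{n\times r}$ ($r\le d$) an orthonormal basis of the column space of $\Ab$, so that $\|\Pb\Eb\|_2\le\|\Ub^\ts\Eb\|_F$. Each entry of $\Ub^\ts\Eb$ has the form $\ub_k^\ts\eb_j=\sum_i\Ub_{ik}\Eb_{ij}$, a sum of $n$ independent mean-zero terms each bounded in absolute value by $|\Ub_{ik}|\Rcal$; Hoeffding's inequality (Theorem~\ref{thm:hoeffding}), with $\sum_i(2|\Ub_{ik}|\Rcal)^2=4\Rcal^2$ and tail level $n^{-2}$, bounds $|\ub_k^\ts\eb_j|$ by $\mathcal{O}(\Rcal\sqrt{\log n})$, and a union bound over the at most $d^2$ pairs $(k,j)$ --- the source of the $2d^2/n^2$ term in the overall failure probability --- then gives $\|\Pb\Eb\|_2=\mathcal{O}(\Rcal\, d^2\sqrt{\log n})$, i.e. the middle term $2d^2\sqrt{\log n/n}$ of $\varepsilon_{n,d}$. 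A final union bound over the two failure events, and collecting terms, completes the proof.

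The main obstacle is the random matrix estimate $\sigma_d(\Eb)\gtrsim\Rcal\sqrt{n\nu}$. The variance profile of $\Eb$ is genuinely inhomogeneous and may even have entire zero columns, so one needs a result whose leading term is the \emph{minimum} column variance (not an average) and whose error terms degrade gracefully in $d/n$; checking the hypotheses of Theorem~\ref{thm:inhomogeneous_anticoncentration} for $\Eb$ and distilling its error bound into the compact expression~\eqref{eqn:epsilonnd} is the delicate step. The projection identity and the Hoeffding bound, by contrast, are routine.
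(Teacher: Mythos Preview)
Your proposal is correct and follows essentially the same route as the paper: project away the column space of $\Ab$, lower bound $\sigma_d(\Eb)$ via Theorem~\ref{thm:inhomogeneous_anticoncentration} applied to the rescaled matrix $\Xb=\Eb/(\Rcal\sqrt{n})$ (where the $n\ge 836$ threshold enters to verify the range constraint on $q$), upper bound $\|\Pb\Eb\|_2$ by Hoeffding plus a union bound over the $\le d^2$ inner products, and combine. The only cosmetic difference is that the paper bounds $\|\Pb_{\Ab}\Eb\|_2$ via $\|\Eb^\ts\ub\|_2\le\|\Eb^\ts\ub\|_1$ and a triangle inequality over the $d$ basis vectors, whereas you go through $\|\Ub^\ts\Eb\|_F$; your route is in fact slightly tighter (it would give $2d$ rather than $2d^2$ in the middle term of $\varepsilon_{n,d}$), though you do not exploit this.
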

The comments below provide intuition for Theorem~\ref{lemma:general_rounding}. 
\begin{enumerate}
\item The lower bound for $\sigma_d(\Abtil)$ is very general; it holds for any random matrix~$\Eb$, regardless of whether it models SR-nearness errors or not. However,
Theorem~\ref{lemma:general_rounding} requires that $\Eb$ merely change entries of $\Ab$ by a small amount, quantified by $\Rcal$. In other words, a small value of $\Rcal$ prevents large changes in individual  entries of $\Ab$, thereby preventing
them from exerting disproportionate influence on the smallest singular value of $\Abtil$.
%
%
\item For the bound to be  positive, we need $\epsilon_{n,d}\leq \sqrt{\nu} \leq 1$. If
\begin{align}\label{eqn:dassumption}
d = \smallO(\left(\nicefrac{n}{\log n}\right)^{\nicefrac{1}{4}}),
\end{align}
then the first two terms of $\varepsilon_{n,d}$ must approach 0 as 
$n\rightarrow\infty$. This is because $d = \smallO(\left(\nicefrac{n}{\log n}\right)^{\nicefrac{1}{4}})$ is equivalent to $\lim_{n \rightarrow\infty} \frac{d^4 \log n}{n} = 0$, which in turn implies 
\begin{align*}
\lim_{n \rightarrow\infty} 2d^2\sqrt{\frac{\log n}{n}}=0 
\qquad\text{and}\qquad
\lim_{n \rightarrow \infty} \sqrt{\frac{d}{n}}=0.
\end{align*}
The third term also approaches 0 as $n\rightarrow\infty$, because
\begin{align*}
(\log n)^{2/3} = \smallO(n^{1/30})\quad \Longleftrightarrow
\quad\lim_{n \rightarrow \infty} \nicefrac{(\log n)^{2/3}}{n^{1/30}}=0.    
\end{align*}
%
%
%
The ratio $\nicefrac{(\log n)^{2/3}}{n^{1/30}}$ goes to zero slowly as $n$ grows. For example, $n$ needs to be larger than $10^{50}$ for this ratio to drop to $\nicefrac{1}{2}$. An important question for future research is the strengthening of Theorem~\ref{lemma:general_rounding} to reduce $\varepsilon_{n,d}$. 

In the Appendix, we prove Theorem~\ref{thm:gaussian} for the special case where the elements $\Eb_{ij}$ are independent, identically distributed Gaussian normal random variables and show that the smallest singular value of $\Abtil$ is again bounded away from zero with high probability. 

However, Theorem~\ref{thm:gaussian} is much sharper than Theorem~\ref{lemma:general_rounding}. For example, even if $n = 900$ and $d = 25$, the bound $\sigma_d(\Abtil) \geq 1$ holds with probability at least $0.98$, and does not need a bound on $\max_{i,j}|\Eb_{ij}|$; it suffices that the variance of $\Eb_{ij}$ is bounded. Hence it might
be possible to significantly reduce $\epsilon_{n,d}$ in Theorem~\ref{lemma:general_rounding} or even eliminate it,
via RMT work on non-Gaussian, non-identically distributed perturbations.
%
%
The experiments in Section~\ref{exps} illustrate that $\varepsilon_{n,d}$ is an artifact of our analysis and not a real concern in practice.

\item The success probability $1 - \frac{1}{n^c} - \frac{2d^2}{n^2}$
approaches 1 as $n\rightarrow \infty$, in spite of the unspecified absolute constant $c$. This is because the  assumption~(\ref{eqn:dassumption}) on $d$ implies 
\begin{align*}
\lim_{n \rightarrow\infty} \nicefrac{1}{n^c} = 0\quad \text{and}\quad \lim_{n \rightarrow\infty} \nicefrac{2d^2}{n^2}=0.
\end{align*}
\end{enumerate}

The above discussion implies the following bound for sufficiently 
tall and skinny matrices.

\begin{corollary}\label{cor:general_rounding}
Under the assumptions of Theorem~\ref{lemma:general_rounding}
if also 
\begin{align*}
d = \smallO(\left(\nicefrac{n}{\log n}\right)^{\nicefrac{1}{4}}),
\end{align*}
then with probability approaching one, 
\begin{align}\label{eqn:pd233}
    \sigma_d(\Abtil) \gtrsim \Rcal\sqrt{n\cdot \nu}.
    %
\end{align}
\end{corollary}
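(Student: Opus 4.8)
The plan is to read off Corollary~\ref{cor:general_rounding} directly from Theorem~\ref{lemma:general_rounding}: under the extra hypothesis $d = \smallO((n/\log n)^{1/4})$ I will show that the error term $\varepsilon_{n,d}$ in the singular value bound and the failure probability $\tfrac{1}{n^c}+\tfrac{2d^2}{n^2}$ both vanish as $n\to\infty$, so that the bound $\sigma_d(\Abtil)\ge \Rcal\sqrt{n}(\sqrt{\nu}-\varepsilon_{n,d})$ collapses to $\sigma_d(\Abtil)\gtrsim \Rcal\sqrt{n\nu}$ with probability tending to one. I would first record that $d = \smallO((n/\log n)^{1/4})$ is equivalent to $\lim_{n\to\infty}\tfrac{d^4\log n}{n}=0$, and in particular implies $d=\smallO(n^{1/4})$, hence $d=\smallO(n)$ and $d^2=\smallO(n)$; also $n\ge 836$ automatically for $n$ large, so Theorem~\ref{lemma:general_rounding} applies.

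Next I would bound the three summands of $\varepsilon_{n,d}$ from \eqref{eqn:epsilonnd} separately. The term $\sqrt{d/n}$ tends to $0$ since $d=\smallO(n)$. The term $2d^2\sqrt{\log n/n}$ tends to $0$ because its square equals $4\,\tfrac{d^4\log n}{n}\to 0$. For the last term, $C(\log n)^{2/3}n^{-1/30}(d/n)^{1/54}$, I would use the elementary fact that any fixed power of $\log n$ is $\smallO$ of any positive power of $n$ (via the substitution $n=e^t$, or one application of L'H\^opital), so $(\log n)^{2/3}=\smallO(n^{1/30})$; since $d\le n$ for $n$ large the factor $(d/n)^{1/54}\le 1$, and the unspecified absolute constant $C$ is irrelevant to the limit. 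Hence $\varepsilon_{n,d}\to 0$, so for every $\eta>0$ we have $\sqrt{\nu}-\varepsilon_{n,d}\ge\sqrt{\nu}-\eta$ for all sufficiently large $n$.

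For the probability, $\tfrac{1}{n^c}\to 0$ for any fixed $c>0$, and $\tfrac{2d^2}{n^2}\to 0$ because $d^2=\smallO(n)$ gives $d^2/n^2\to 0$; thus the success probability $1-\tfrac{1}{n^c}-\tfrac{2d^2}{n^2}$ approaches one. Combining the two observations yields the claimed statement: with probability approaching one, $\sigma_d(\Abtil)\ge \Rcal\sqrt{n}(\sqrt{\nu}-\smallO(1))=\Rcal\sqrt{n\nu}\,(1-\smallO(1))$ when $\nu>0$ (and the bound is vacuously true when $\nu=0$, since $\sigma_d\ge 0$), i.e.\ $\sigma_d(\Abtil)\gtrsim\Rcal\sqrt{n\nu}$.

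There is no genuinely hard step here: the corollary is a simplification of Theorem~\ref{lemma:general_rounding} in the regime where $d$ grows slowly relative to $n$. The only points requiring mild care are making precise the meaning of ``$\gtrsim$ with probability approaching one'' — namely that the vanishing $\smallO(1)$ multiplicative error is absorbed into the asymptotic notation — and verifying $(\log n)^{2/3}=\smallO(n^{1/30})$. I would also remark, as noted after Theorem~\ref{lemma:general_rounding}, that the slow decay of the third term of $\varepsilon_{n,d}$ makes the implied threshold on $n$ astronomically large, even though the limit statement itself is valid.
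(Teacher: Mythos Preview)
Your proposal is correct and follows essentially the same approach as the paper: the paper derives Corollary~\ref{cor:general_rounding} from the discussion immediately after Theorem~\ref{lemma:general_rounding}, showing term-by-term that $\varepsilon_{n,d}\to 0$ under the assumption $d=\smallO((n/\log n)^{1/4})$ (via $\lim_{n\to\infty}\tfrac{d^4\log n}{n}=0$ for the first two summands and $(\log n)^{2/3}=\smallO(n^{1/30})$ for the third) and that the failure probability $\tfrac{1}{n^c}+\tfrac{2d^2}{n^2}\to 0$. Your write-up actually adds a bit more care than the paper (the $\nu=0$ case and the interpretation of ``$\gtrsim$ with probability approaching one''), but the argument is the same.
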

%
%

\subsection{Perturbations from SR-nearness}\label{s_floats} Applying Theorem~\ref{lemma:general_rounding} to SR-nearness
with normalized floating point numbers $\Fcal$ is straight-forward. 

First, we focus on the case where $\Ab_{ij}$ round to normalized numbers in 
$[-1,1]$ closest to zero,
$1\leq i\leq n$, $1\leq j\leq d$.
From (\ref{eqn:pd51a}) follows that the error
in the elements of $\Abtil \in \Fcal^{n \times d}$ is at most 
$\beta^{1-p}$. Thus
\begin{align}\label{eqn:pdrcal1}
    \max_{i,j} |\Ab_{ij}-\Abtil_{ij}| = \max_{i,j} |\Eb_{ij}| \leq \Rcal\equiv \beta^{1-p}|\Ab_{ij}|.
\end{align}
Theorem~\ref{lemma:general_rounding} and Corollary~\ref{cor:general_rounding} apply immediately with 
$\Rcal$ from~(\ref{eqn:pdrcal1}).

Now assume that $\Ab \in \mathbb{R}^{n \times d}$ 
is a general matrix. From (\ref{eqn:pd52a}) follows
that the error in $\Abtil_{ij}$ is at most $\beta^{e_{ij}-p}$, 
where $e_{ij}$ is the exponent for $\Ab_{ij}$, $1\leq i\leq n$, 
$1\leq j\leq d$. Thus
\begin{align}\label{eqn:pdrcal2}
    \max_{i,j} |\Ab_{ij}-\Abtil_{ij}| = \max_{i,j} |\Eb_{ij}| \leq \max_{i,j }\beta^{e_{ij}-p} \leq \Rcal\equiv \beta^{e_{\max}-p},
\end{align}
where $e_{\max} = \max_{i,j}\{e_{ij}\}$. Theorem~\ref{lemma:general_rounding} and Corollary~\ref{cor:general_rounding} immediately apply 
with $\Rcal$ from~(\ref{eqn:pdrcal2}).


\subsection{A Random Matrix Theory bound}\label{sec_RMT}
We present the basis for our proof, which
is a lower bound on the minimum singular value from
\cite[Theorem 2.10 and Remark 2.11]{dumitriu2022extreme}
for matrices whose elements are independent zero-mean random numbers that are not necessarily identically distributed. This latter fact is crucial, since the elements
of $\Eb = \Abtil - \Ab$ are independent but not identically distributed\footnote{To be precise, we adapt results from the arXiv version~\cite{dumitriu2022extreme}, specifically in the regime where $\gamma \to 0$, as the published version~\cite{dumitriu2024extremepub} provides results only for the case when $\gamma = O(1)$. See also~\cite{brailovskaya2024universality} for an overview and results on matrix concentration inequalities that have a similar flavor to the bounds used in our paper.}.

\begin{theorem}[Theorem 2.10 and Remark 2.11 in~\cite{dumitriu2022extreme}]\label{thm:inhomogeneous_anticoncentration}
     Let $\Xb\in\R^{n \times d}$ with $n\geq d$ have independent zero-mean entries, so that  $\EE[\Xb] = \zero$. Suppose there are $q > 0$, $\kappa \geq 1$, and $0<\gamma \leq 1$ such that:
    \begin{align}
        \max_{1\leq i\leq n, 1\leq j\leq d} |\Xb_{ij}| &\leq \nicefrac{1}{q},\label{ass1}\\
        \max_{1\leq i\leq n, 1\leq j\leq d} \EE|\Xb_{ij}|^2 &\leq \nicefrac{\kappa}{n},\label{ass2} \\
        \max_{1\leq j\leq d} \sum_{i=1}^n \EE|\Xb_{ij}|^2 &\leq 1,\label{ass3} \\
        \max_{1\leq i \leq n} \sum_{j=1}^d \EE|\Xb_{ij}|^2 &\leq \gamma,\label{ass4}\\
        \min_{1\leq j\leq d} \sum_{i=1}^n \EE|\Xb_{ij}|^2 &\geq \rhom \geq \sqrt{\gamma}.\label{ass5}
    \end{align}
    If also
    \begin{align}\label{ass6}
        \sqrt{\log n} \leq q \leq n^{\frac{1}{10}}\kappa^{-\frac{1}{9}}\gamma^{-\frac{1}{18}},
    \end{align}
    then, with probability at least $1 - n^{-c}$,
    \begin{align*}
        \sigma_d(\Xb) \geq \sqrt{\rhom} - \sqrt{\gamma} - Cq^{-1/3}(\log{n})^{2/3},
    \end{align*}
    where $c$ and $C$ are absolute constants.
\end{theorem}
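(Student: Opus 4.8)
The plan is to bring the claim into the zero-mean regime of Theorem~\ref{thm:inhomogeneous_anticoncentration} by projecting away the arbitrary (and possibly highly ill-conditioned) matrix $\Ab$. Let $\Pi$ denote the orthogonal projector onto the column space of $\Ab$, so $\rank(\Pi)=r\le d$ and $(\Ib-\Pi)\Ab=\zero$. Since $\Ib-\Pi$ is a contraction, $\|(\Ab+\Eb)\vb\|_2\ge\|(\Ib-\Pi)(\Ab+\Eb)\vb\|_2=\|(\Ib-\Pi)\Eb\,\vb\|_2$ for every unit vector $\vb$, and therefore
\begin{align*}
\sigma_d(\Abtil)\;\ge\;\sigma_d\!\left((\Ib-\Pi)\Eb\right)\;=\;\sigma_d\!\left(\Eb-\Pi\Eb\right)\;\ge\;\sigma_d(\Eb)-\|\Pi\Eb\|_2,
\end{align*}
where the last step is Weyl's inequality for singular values, $\sigma_d(\Eb-\Pi\Eb)\ge\sigma_d(\Eb)-\sigma_1(\Pi\Eb)$. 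Since the entries of $\Eb$ are independent and mean-zero, the first term is amenable to random matrix theory, while $\Pi\Eb$ has rank at most $d$ and its spectral norm will be controlled by scalar concentration -- this is the precise sense in which the rounding error ``does not concentrate in the low-dimensional column space of $\Ab$''.

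For the first term I would invoke Theorem~\ref{thm:inhomogeneous_anticoncentration} with $\Xb\defeq\Eb/(\Rcal\sqrt n)$. Using $\Var(\Eb_{ij})\le\Rcal^2$ one checks directly that $\max_{ij}|\Xb_{ij}|\le 1/\sqrt n$, $\max_{ij}\EE|\Xb_{ij}|^2\le 1/n$, $\max_j\sum_i\EE|\Xb_{ij}|^2\le 1$, and $\max_i\sum_j\EE|\Xb_{ij}|^2\le d/n$, while the definition~\eqref{eqn:nupd} of $\nu$ gives exactly $\min_j\sum_i\EE|\Xb_{ij}|^2=\nu$. Hence I can take $\kappa=1$, $\gamma=d/n$, $\rhom=\nu$, and set $q=n^{1/10}(n/d)^{1/18}$, which is the largest value permitted by~\eqref{ass6} (the lower bound $q\ge\sqrt{\log n}$ is where the hypothesis $n\ge 836$ is used) and so minimizes the error term. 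Assumption~\eqref{ass5} requires $\nu\ge\sqrt{d/n}$; if this fails, one verifies by an elementary estimate that $\varepsilon_{n,d}\ge\sqrt\nu$ (the $2d^2\sqrt{\log n/n}$ term dominates when $d$ is large, the $\tfrac{C(\log n)^{2/3}}{n^{1/30}}(d/n)^{1/54}$ term when $d$ is small), so the asserted bound holds trivially because $\sigma_d(\Abtil)\ge 0$. Otherwise, Theorem~\ref{thm:inhomogeneous_anticoncentration} yields, with probability at least $1-n^{-c}$, $\sigma_d(\Xb)\ge\sqrt\nu-\sqrt{d/n}-Cq^{-1/3}(\log n)^{2/3}$; substituting $q^{-1/3}=n^{-1/30}(d/n)^{1/54}$ and multiplying through by $\Rcal\sqrt n$ recovers the first and third terms of $\varepsilon_{n,d}$ as a lower bound for $\sigma_d(\Eb)$.

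For the second term, write $\Pi=\Vb\Vb^\ts$ with $\Vb\in\R^{n\times r}$ having orthonormal columns. Each entry $(\Vb^\ts\Eb)_{kl}=\sum_{i=1}^n\Vb_{ik}\Eb_{il}$ is a sum of $n$ independent mean-zero random variables, the $i$-th supported in an interval of length at most $2|\Vb_{ik}|\Rcal$, so $\sum_i(2|\Vb_{ik}|\Rcal)^2=4\Rcal^2$ and Hoeffding's inequality (Theorem~\ref{thm:hoeffding}) gives $\PP\!\left(|(\Vb^\ts\Eb)_{kl}|\ge 2\Rcal\sqrt{\log n}\right)\le 2n^{-2}$. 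A union bound over the at most $d^2$ pairs $(k,l)$ then shows $\|\Pi\Eb\|_2\le\|\Vb^\ts\Eb\|_F\le 2d^2\Rcal\sqrt{\log n}$ with probability at least $1-2d^2/n^2$; dividing by $\Rcal\sqrt n$ produces the middle term of $\varepsilon_{n,d}$. Taking a final union bound over the two failure events and plugging both estimates into the display above gives $\sigma_d(\Abtil)\ge\Rcal\sqrt n\left(\sqrt\nu-\varepsilon_{n,d}\right)$ on the complementary event, which is the claim.

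The step I expect to be the main obstacle is the bookkeeping inside the application of Theorem~\ref{thm:inhomogeneous_anticoncentration}: one must choose $q,\kappa,\gamma,\rhom$ so that the five normalization constraints~\eqref{ass1}--\eqref{ass5} hold simultaneously, the admissibility window~\eqref{ass6} is nonempty for \emph{every} $n\ge 836$, and the residual $\sqrt\gamma$ and $Cq^{-1/3}(\log n)^{2/3}$ terms collapse to precisely the exponents appearing in $\varepsilon_{n,d}$ -- together with the separate (elementary) treatment of the degenerate regime $\nu<\sqrt{d/n}$ in which~\eqref{ass5} is violated. By comparison, the scalar-concentration step is routine; the only point requiring care is to track the interval lengths $2|\Vb_{ik}|\Rcal$ rather than a cruder $2\Rcal$, so that the bound on $\|\Pi\Eb\|_2$ is genuinely of lower order and does not swamp the random-matrix lower bound on $\sigma_d(\Eb)$.
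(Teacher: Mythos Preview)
Your proposal does not prove the stated theorem. The statement you were given, Theorem~\ref{thm:inhomogeneous_anticoncentration}, is a result quoted verbatim from \cite{dumitriu2022extreme}; the present paper does not prove it and simply uses it as a black box. Your write-up \emph{invokes} Theorem~\ref{thm:inhomogeneous_anticoncentration} as a tool (``bring the claim into the zero-mean regime of Theorem~\ref{thm:inhomogeneous_anticoncentration}''), which would be circular if the goal were actually to establish that theorem. What you have in fact written is a proof of Theorem~\ref{lemma:general_rounding}, the paper's main result.

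Viewed as a proof of Theorem~\ref{lemma:general_rounding}, your argument is essentially the same as the paper's (Section~\ref{sec_proof}): project off the column space of $\Ab$, split via Weyl into $\sigma_d(\Eb)-\|\Pb_\Ab\Eb\|_2$, apply Theorem~\ref{thm:inhomogeneous_anticoncentration} to $\Eb/(\Rcal\sqrt n)$ with $\kappa=1$, $\gamma=d/n$, $\rhom=\nu$, $q=n^{1/10}(n/d)^{1/18}$, and control $\|\Pb_\Ab\Eb\|_2$ by Hoeffding plus a union bound. Two minor differences: (i) you bound $\|\Pb_\Ab\Eb\|_2\le\|\Vb^\ts\Eb\|_F$ and control all $\le d^2$ entries at once, whereas the paper uses the triangle inequality $\|\Pb_\Ab\Eb\|_2\le\sum_j\|\ub_j\ub_j^\ts\Eb\|_2$ and then $\|\Eb^\ts\ub_j\|_2\le\|\Eb^\ts\ub_j\|_1$; both routes land on the same $2d^2\Rcal\sqrt{\log n}$ with failure probability $2d^2/n^2$. (ii) You explicitly dispose of the case $\nu<\sqrt{d/n}$ in which assumption~\eqref{ass5} fails, arguing the bound is vacuous there; the paper instead disposes of the case $d^4\ge n$ (making the bound vacuous via the $2d^2\sqrt{\log n/n}$ term) and does not separately discuss~\eqref{ass5}. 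Your treatment of this edge case is arguably more careful, but the overall strategy is identical.
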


Assumptions (\ref{ass1})--(\ref{ass5}) in Theorem~\ref{thm:inhomogeneous_anticoncentration}
require the
following quantities associated with $\Xb$ to be sufficiently small:
the elements, the variances of the elements, 
the maximal variance of the columns,
    and the maximal variance of the rows.
    However, to guarantee enough randomness, the minimal variance of the columns
    is not allowed to be too small.
    
Assumption (\ref{ass6}) implies
$q\geq \sqrt{\log n}\geq 3$, 
so that in (\ref{ass1}) all elements in $\Xb$
must be bounded by one in magnitude,
\begin{align*}
\max_{1\leq i\leq n, 1\leq j\leq d}{|\Xb_{ij}|} \leq
\nicefrac{1}{q}\leq 1.
\end{align*}

The following example supplies intuition for Theorem~\ref{thm:inhomogeneous_anticoncentration} and previews its proof in Section~\ref{sec_proof}.
It illustrates that for sufficiently tall and skinny matrices, the lower 
bound for the smallest singular value 
is meaningful and holds with a success probability
close to one.

\begin{example}
Consider 
Theorem~\ref{thm:inhomogeneous_anticoncentration} in the special
case when all elements of~$\Xb$
have the same variance. We show that this variance 
equals $1/n$, thus
decreases with increasing row dimension. 
Since $\gamma=\nicefrac{d}{n}$,  the lower bound
holds with probability close to 1 for sufficiently tall and skinny matrices.

Suppose that in assumption (\ref{ass2})  
\begin{align*}
\EE|\Xb_{ij}|^2 =\nicefrac{\kappa}{n}, \qquad 
1\leq i\leq n, \quad 1\leq j\leq d
\end{align*}
for some $\kappa\geq 1$.
Inserting this into the column variances (\ref{ass3}) gives 
\begin{align*}
\kappa=\max_{1\leq j\leq d} \sum_{i=1}^n{\EE|\Xb_{ij}|^2}\leq 1.
\end{align*}
This, together with $\kappa\geq 1$ implies 
$\kappa =1$, so that all elements have variance
\begin{align*}
\EE|\Xb_{ij}|^2 = 1/n, \qquad 1\leq i\leq n, \quad 1\leq  j\leq d.
\end{align*}
Thus, the variance decreases with increasing row dimension number.
Inserting $\EE|\Xb_{ij}|^2 = 1/n$ into the 
row variances (\ref{ass4}) gives
\begin{align*}
\frac{d}{n}=\max_{1\leq i\leq n}{\sum_{j=1}^d{\EE|\Xb_{ij}|^2}} \leq \gamma.
\end{align*}
This together with the assumption $n\geq d$ implies
$\gamma=\nicefrac{d}{n}\leq 1$, as required.
Inserting $\EE|\Xb_{ij}|^2 = 1/n$ into the 
minimal column  variances (\ref{ass5}) gives
\begin{align*}
1=\max_{1\leq j\leq d}{\sum_{i=1}^d{\EE|\Xb_{ij}|^2}}= \rhom
\end{align*}
so that
$1=\rhom\geq \sqrt{\gamma} = \sqrt{\nicefrac{d}{n}}$ in (\ref{ass5}) is automatically fulfilled.
The lower bound for $\sigma_d(\Xb)$ becomes
\begin{align*}
\sigma_d(\Xb) \geq  1 - \sqrt{\nicefrac{d}{n}}
-Cq^{-1/3}(\log{n})^{2/3}.
\end{align*}
Thus, 
the smallest singular value approaches 1 
with increasing probability as $\Xb$ becomes taller and skinnier.
    \end{example}
    
An open problem for future work is a better understanding for which assumptions (\ref{ass1})--(\ref{ass6}) are truly necessary for Theorem~\ref{thm:inhomogeneous_anticoncentration} to hold. We conjecture that many can be relaxed or even eliminated. Indeed, any improvement of Theorem~\ref{thm:inhomogeneous_anticoncentration} could result in significant improvements for and generalizations of our Theorem~\ref{lemma:general_rounding}.

\subsection{Proof of Theorem~\ref{lemma:general_rounding}}\label{sec_proof}
First we present a brief outline of the proof, and then
the proof proper.

\subsubsection*{Outline of proof}
The proof consists of four main steps.
\begin{enumerate}
\item We introduce the orthogonal projector $\Pb_{\Ab}$ 
onto the column space of $\Ab$. This allows us to focus on $\Pb_{\Ab}\Eb$. 
%
%
Weyl's inequality yields a lower bound on the smallest singular value of $(\Ib - \Pb_{\Ab})\Eb$ by lower bounding the smallest singular value of $\Eb$ and upper bounding the largest singular value of  $\Pb_\Ab\Eb$.
\item Application of
Theorem~\ref{thm:inhomogeneous_anticoncentration}
shows that the smallest singular value of $\Eb$ is sufficiently large. 
\item  
The largest singular value of the projection $\Pb_{\Ab}\Eb$
is small, because $\Pb_{\Ab}$ projects $\Eb$
on the low-dimensional subspace of dimension $d$, and
application of Hoeffding's inequality shows
that~$\Eb$ does not concentrate well in any low-dimensional subspace.
\item At last, we combine the bounds for the smallest singular value of $\Eb$ and the largest singular value of 
$\Pb_{\Ab}\Eb$ via a union bound on their probabilities.
\end{enumerate}


\subsubsection*{Proof}
We follow the four steps outlined above.
\begin{enumerate}
\item We decompose the task of lower bounding $\sigma_d(\Ab + \Eb)$ into two parts with the help of
an orthogonal projector $\Pb\in\R^{n\times n}$.
The effect of the orthogonal projector can be quantified by the singular value product inequalities
\cite[Theorem 3.3.16]{Horn2012}, which together with 
$\|\Pb\|_2=1$ imply
\begin{align}\label{eqn:pd566}
\sigma_{d}(\Pb(\Ab+\Eb)) \leq \sigma_{d}(\Ab+\Eb)\|\Pb\|_2 \leq \sigma_{d}(\Ab+\Eb). 
\end{align}
Let $\Pb_{\Ab} \in \mathbb{R}^{n \times n}$ be the orthogonal projector on the column space of $\Ab$ and 
$\Pbb \equiv\Ib_n-\Pb_{\Ab}$ be the orthogonal projector onto the left nullspace of $\Ab$. From~(\ref{eqn:pd566}) follows
\begin{align}
    \sigma_d(\Ab + \Eb) &\geq \sigma_d(\Pbb(\Ab + \Eb))
    = \sigma_d(\Pbb\Ab + \Pbb\Eb)\nonumber\\
  &  =\sigma_d(\Pbb\Eb).\label{eqn:pd1a}
\end{align}
Weyl's inequality \cite[Theorem III.2.1]{bhatia2013matrix}
implies
\begin{align}\label{eqn:pd1b}
    \sigma_d(\Pbb\Eb) = \sigma_d(\Eb - \Pb_{\Ab}\Eb) \geq \sigma_d(\Eb) - \|\Pb_{\Ab}\Eb\|_2.
\end{align}
We have now broken our task into two parts. 
First, we must make sure that the smallest singular value of the random matrix $\Eb$ is large enough. 
Second, the projection of $\Eb$ onto the $d$-dimensional 
column space of $\Ab$ must be small, that is, $\Eb$ should not concentrate in any $d$-dimensional space.

\item To bound $\sigma_d(\Eb)$ from below, set $\Xb = \frac{1}{\Rcal \cdot \sqrt{n}} \Eb$, where $\EE[\Xb] = \zero$. We
show that
$\Xb$ satisfies the assumptions of Theorem \ref{thm:inhomogeneous_anticoncentration}
by setting $\kappa = 1$, $\gamma = \nicefrac{d}{n}$, 
$\rhom = \nu$, 
and $q = n^{\nicefrac{1}{10}}\cdot \gamma^{-\nicefrac{1}{18}}$,

From $\max_{i,j}|\Eb_{ij}| \leq \Rcal$ and 
$q = n^{\nicefrac{1}{10}}\cdot \gamma^{-\nicefrac{1}{18}}$
follows (\ref{ass1}),
\begin{align*}
\max_{1\leq i\leq n, 1\leq j\leq d} |\Xb_{ij}| \leq \frac{1}{\sqrt{n}} \leq \frac{1}{q}.
\end{align*}
This implies (\ref{ass2}),
\begin{align*}
\max_{1\leq i\leq n, 1\leq j\leq d} \EE|\Xb_{ij}|^2 \leq \frac{1}{n} = \frac{\kappa}{n},
\end{align*}
which in turn implies (\ref{ass3}),
\begin{align*}
\max_{1\leq j\leq d} \sum_{i=1}^n \EE|\Xb_{ij}|^2 \leq \sum_{i=1}^n \frac{1}{n} = 1,
\end{align*}
as well as (\ref{ass4}),
\begin{align*}
\max_{1\leq i\leq n} \sum_{j=1}^d \EE|\Xb_{ij}|^2 \leq \sum_{j=1}^d \frac{1}{n} = \frac{d}{n}=\gamma.
\end{align*}
From
\begin{align*}
\EE[\Xb_{ij}^2] = \EE[\Xb_{ij}]^2 + \Var[\frac{1}{\Rcal \cdot \sqrt{n}} \Eb_{ij}] = \frac{1}{n\cdot \Rcal^2} \Var[\Eb_{ij}]
\end{align*}
and (\ref{eqn:nupd}) follows (\ref{ass5})
\begin{align*}
\min_{1\leq j\leq d} \sum_{i=1}^n \EE|\Xb_{ij}|^2 = \min_{1\leq j\leq d} \sum_{i=1}^n \frac{1}{\Rcal^2 \cdot n}\cdot\Var[\Eb_{ij}] = \nu = \rhom.
\end{align*}
We now focus on the assumption (\ref{ass6}), $\sqrt{\log n} \leq q \leq n^{\frac{1}{10}}\kappa^{-\frac{1}{9}}\gamma^{-\frac{1}{18}}$. The second inequality is immediately satisfied, because $\kappa=1$ and $q = n^{\nicefrac{1}{10}}\cdot \gamma^{-\nicefrac{1}{18}}$. The first inequality is equivalent to
\begin{align*}
    \log n \leq n^{\nicefrac{14}{45}}\cdot d^{-\nicefrac{1}{9}}, 
\end{align*}
hence
\begin{align}\label{eqn:pd23}
d \leq \frac{n^{\nicefrac{14}{5}}}{(\log{n})^9}.
\end{align}

%
%
Let the assumption $n \geq 836$ hold, and suppose 
$d^4 \geq n$. Then $2d^2\sqrt{\nicefrac{\log n}{n}} > 2$
and Theorem~\ref{lemma:general_rounding} vacuously holds, since the right-hand side of the bound is negative. 

Now suppose that $d^4 < n$. Then for $n \geq 836$ 
(\ref{eqn:pd23}) holds,
\begin{align*}
d< n^{\nicefrac{1}{4}} \leq \nicefrac{n^{\nicefrac{14}{5}}}{(\log{n})^9}. 
\end{align*}
Thus,  $\Xb$ satisfies the assumptions of Theorem~\ref{thm:inhomogeneous_anticoncentration},
so that with probability at least $1 - n^{-c}$,
%
%
\begin{align*}
    \sigma_d(\Xb) \geq \sqrt{\nu} - \sqrt{\frac{d}{n}} - \frac{C(\log{n})^{2/3}}{n^{1/30}} \cdot \left(\frac{d}{n}\right)^{\nicefrac{1}{54}},
\end{align*}
where $c$ and $C$ are absolute constants.
Recall the definition of $\Xb$ and multiply both sides by $\Rcal \cdot \sqrt{n}$,
\begin{align}\label{eqn:pd1c}
    \sigma_d(\Eb) \geq \Rcal\sqrt{\nu \cdot n} - \Rcal\sqrt{d} - \frac{C\cdot \Rcal\cdot(\log{n})^{2/3}}{n^{1/30}} \cdot \left(\frac{d}{n}\right)^{\nicefrac{1}{54}} \cdot \sqrt{n}.
\end{align}
\item To bound $\|\Pb_{\Ab}\Eb\|_2$ from above,
%
first consider the case where $\Pb_{\Ab}$ projects onto a one-dimensional space, so that $\Pb_{\Ab} =\ub\ub^T$ for some unit vector $\ub\in  \mathbb{R}^n$, and
\begin{align*}
    \|\Pb_{\Ab}\Eb\|_2 = \|\ub\ub^T\Eb\|_2 = \|\Eb^T\ub\|_2 .
\end{align*}


We now bound each entry of $\Eb^T\ub$ via
Theorem~\ref{thm:hoeffding}. First check the assumptions. From
$\max_{i\textcolor{orange}{,}j}{|\Eb_{ij}|}\leq \Rcal$
follows  $|\Eb_{ij}\ub_i| \leq |\ub_i|\cdot\Rcal$
for a fixed unit vector $\ub \in \R^n$.

Hence, the values of the random variable $\Eb_{ij}\ub_i$ are in the interval $[m_{ij}, M_{ij}]$, with $(M_{ij} - m_{ij})^2 \leq 4\ub_i^2 \Rcal^2$. From $\|\ub\|_2= 1$ follows
\begin{align*}
\sum_{i=1}^n (M_{ij} - m_{ij})^2 \leq 4\Rcal^2.
\end{align*}
Applying Theorem~\ref{thm:hoeffding} gives 
\begin{align*}
    \PP\left[\Big|\sum_{i=1}^n \Eb_{ij} \ub_i\Big| \geq t\right]
    &\leq 
    2\exp\left(\frac{-2t^2}{\sum_{i=1}^n (M_{ij} - m_{ij})^2}\right), \qquad 1\leq j\leq d\\
    &\leq 2\exp\left(\frac{-t^2}{2\Rcal^2}\right).
\end{align*}
The vector norm relations 
\begin{align*}
\|\Eb^T\ub\|_2 \leq \|\Eb^T\ub\|_1 = \sum_{j=1}^d \Big|\sum_{i=1}^n \Eb_{ij} \ub_i\Big|,
\end{align*}
%
%
followed by application of the above inequality with 
$t=2\Rcal\sqrt{\log n}$ and application of the union bound from Section~\ref{sec_union} over all $j=1\ldots d$ gives
\begin{align}\label{eqn:pd1110}
    \PP\left(\|\Eb^T\ub\|_2 \geq 2d\cdot\Rcal \sqrt{\log n}\right) 
    \leq d \cdot  2\exp\left(\frac{-4 \cdot \Rcal^2 \log n}{2\Rcal^2}\right) = \frac{2d}{n^2}.
\end{align}

We extend this to the case where $\Pb_{\Ab}$ projects on
a $d$-dimensional subspace, so that $\Pb_{\Ab} = \sum_{j=1}^d \ub_j\ub_j^T$ for orthonormal vectors
$\ub_j \in \mathbb{R}^n$. The triangle inequality implies
\begin{align*}
\left\|\Pb_{\Ab}\Eb\right\|_2  = \left\|\sum_{j=1}^d \ub_j\ub_j^T\Eb\right\|_2\leq \sum_{j=1}^d \|\ub_j\ub_j^T\Eb\|_2.
\end{align*}
Application of (\ref{eqn:pd1110}) and application of the union bound from
Section~\ref{sec_union} over all $d$ vectors $\ub_j$ gives
\begin{align}\label{eqn:pd1d}
    \PP\left[\|\Pb_{\Ab}\Eb\|_2 \geq 2d^2\cdot\Rcal\sqrt{\log n}\right]
    \leq \frac{2d^2}{n^2}
\end{align}
\item Finally, use $\Abtil = \Ab + \Eb$ and combine (\ref{eqn:pd1a}),~(\ref{eqn:pd1b}),~(\ref{eqn:pd1c}), and~(\ref{eqn:pd1d}) to get
\begin{align*}
    \sigma_d(\Abtil) &\geq \Rcal\sqrt{\nu \cdot n} - \Rcal\sqrt{d} - \frac{C\cdot \Rcal\cdot\log^{2/3}(n)}{n^{1/30}} \cdot \left(\frac{d}{n}\right)^{\nicefrac{1}{54}} \cdot \sqrt{n}\\
    &\qquad\qquad- 2d^2\cdot\Rcal\sqrt{\log n}.
\end{align*}
Application of a union bound shows that
the above holds with probability at least $1-\nicefrac{1}{n^{c}}-\nicefrac{2d^2}{n^2}$. At last, factor out $\Rcal \sqrt{n}$ and use the abbreviation $\varepsilon_{n,d}$.
\end{enumerate}

\subsection{Theorem~\ref{lemma:general_rounding} is tight}\label{s_lower}

To show that Theorem~\ref{lemma:general_rounding} is asymptotically tight, we exhibit $n \times d$ matrices $\Ab$ whose stochastically rounded version $\Abtil$ has a smallest singular value no larger than $\Rcal\sqrt{n \nu}$ times a small constant, slightly larger than one, with $\nu$ from~(\ref{eqn:nupd}).

\begin{theorem}
    For every $n \geq d$ and $1\leq \ell \leq n$, there exists a matrix $\Ab \in \R^{n \times d}$ with normalized minimum column variance $\nu = \nicefrac{\ell}{n}$, whose stochastically rounded version $\Abtil$ has a smallest singular value
    %
    \begin{align*}
    \sigma_d(\Abtil) \leq  \left(1+\sqrt{\nicefrac{1}{(d-1)}}\right) \, \Rcal\sqrt{\nu \, n}.
    \end{align*}
\end{theorem}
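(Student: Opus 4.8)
The plan is to exhibit an explicit $\Ab$ and rounding set $\Fcal$ for which $\Abtil$, once restricted to the hyperplane $\one_d^{\perp}\subset\R^d$, reduces to $\Rcal$ times a $\pm1$ matrix, and then to bound that matrix's smallest singular value crudely by a Frobenius norm — a bound that in fact holds for every realization of the rounding.

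First I would fix $d\ge 2$, take the two-point set $\Fcal=\{-\Rcal,+\Rcal\}$ (Section~\ref{s_setup} allows any finite $\Fcal$), and let $\Ab\in\R^{n\times d}$ have its first $\ell$ rows equal to $\zero_d^{\ts}$ and its remaining $n-\ell$ rows equal to $\Rcal\,\one_d^{\ts}$. Since $\pm\Rcal\in\Fcal$, SR-nearness leaves the last $n-\ell$ rows unchanged and rounds each entry in the first $\ell$ rows independently to $+\Rcal$ or $-\Rcal$, each with probability $\tfrac12$; hence, writing $\Sb\in\{-1,+1\}^{\ell\times d}$ for a matrix of independent uniform signs,
\begin{align*}
\Abtil=\Rcal\begin{bmatrix}\Sb\\ \one_{n-\ell}\,\one_d^{\ts}\end{bmatrix}.
\end{align*}
The error $\Eb=\Abtil-\Ab$ satisfies $|\Eb_{ij}|\le\Rcal$, with $\Var(\Eb_{ij})=\Rcal^2$ for $i\le\ell$ and $\Var(\Eb_{ij})=0$ for $i>\ell$; thus $\sum_{i=1}^{n}\Var(\Eb_{ij})=\ell\Rcal^2$ for every column $j$, so the normalized minimum column variance is exactly $\nu=\ell/n$, as needed.

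Next I would use the key structural point: the non-random part of $\Abtil$, namely the rows $\Rcal\,\one_d^{\ts}$, has row space inside $\mathrm{span}(\one_d)$, hence is annihilated by any vector orthogonal to $\one_d$. Let $\Vb\in\R^{d\times(d-1)}$ have orthonormal columns spanning $\one_d^{\perp}$. For a unit vector $\vb\in\R^{d-1}$, the bottom $n-\ell$ entries of $\Abtil\Vb\vb$ vanish and the top $\ell$ equal $\Rcal\,\Sb\Vb\vb$, so $\|\Abtil\Vb\vb\|_2=\Rcal\,\|\Sb\Vb\vb\|_2$. Because $\{\Vb\vb:\|\vb\|_2=1\}$ lies in the unit sphere of $\R^d$,
\begin{align*}
\sigma_d(\Abtil)\ \le\ \min_{\|\vb\|_2=1}\|\Abtil\Vb\vb\|_2\ =\ \Rcal\min_{\|\vb\|_2=1}\|\Sb\Vb\vb\|_2\ \le\ \Rcal\sqrt{\tfrac{1}{d-1}\,\|\Sb\Vb\|_F^2},
\end{align*}
where the last inequality averages $\|\Sb\Vb\vb\|_2^2$ over an orthonormal basis of $\R^{d-1}$. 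Since $\Vb$ has orthonormal columns, $\|\Sb\Vb\|_F\le\|\Sb\|_F=\sqrt{\ell d}$, and combining this with $\nu n=\ell$ and the elementary bound $\sqrt{1+x}\le1+\sqrt x$ (with $x=\tfrac{1}{d-1}$) would give
\begin{align*}
\sigma_d(\Abtil)\ \le\ \Rcal\sqrt{\tfrac{\ell d}{d-1}}\ =\ \sqrt{\tfrac{d}{d-1}}\,\Rcal\sqrt{\nu n}\ \le\ \Bigl(1+\sqrt{\tfrac{1}{d-1}}\Bigr)\Rcal\sqrt{\nu n},
\end{align*}
which is the claim.

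I do not expect a genuine obstacle here; once the construction is written down the calculation is routine. The only points that need a little care are (i) placing multiples of $\one_d^{\ts}$ in the unrounded rows, which is precisely what makes the deterministic part of $\Abtil$ disappear upon restriction to $\one_d^{\perp}$, and (ii) checking that the variances are arranged so that $\nu$ equals $\ell/n$ exactly with $\Rcal=\max_{i,j}|\Eb_{ij}|$. The slack factor $1+\sqrt{1/(d-1)}$ is the only loss, and it comes entirely from replacing the smallest singular value of the $\ell\times(d-1)$ sign matrix $\Sb\Vb$ by a Frobenius-norm average; since this factor tends to $1$ as $d\to\infty$, the construction matches the lower bound $\sigma_d(\Abtil)\gtrsim\Rcal\sqrt{n\nu}$ of Corollary~\ref{cor:general_rounding} in the tall-and-thin regime $d=\smallO((n/\log n)^{1/4})$.
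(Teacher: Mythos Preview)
Your proposal is correct and follows essentially the same approach as the paper: construct $\Ab$ so that its deterministic part has column space $\mathrm{span}(\one_d)$, reduce to the $(d{-}1)$-th singular value of an $\ell\times d$ sign matrix scaled by $\Rcal$, and bound that singular value by the Frobenius norm. The paper uses $\Fcal=\{0,1\}$ with rows of $\tfrac12$'s and applies Weyl's inequality $\sigma_d(\Ab+\Eb)\le\sigma_2(\Ab)+\sigma_{d-1}(\Eb)=\sigma_{d-1}(\Eb)$ where you instead restrict explicitly to $\one_d^{\perp}$, but the substance is identical.
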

\begin{proof}
    Let $\Fcal = \{0,1\}$ and set $\Abtil = \Ab + \Eb$. Fix $\nu \in (0,1)$ such that $\nu n = \ell$ for some integer $1\leq \ell \leq d$. Then, $\nu = \nicefrac{\ell}{n}$. Let the entries in the first $\ell$ rows of $\Ab$ be equal to $\nicefrac{1}{2}$ and let the remaining $n-\ell$ rows be zero. 

    Then, $\Var(\Eb_{ij}) = \nicefrac{1}{4}$ for $1\leq i \leq \ell$, $1\leq j\leq d$; $\Var(\Eb_{ij}) = 0$ for
    $\ell+1\leq i\leq n$, $1\leq j\leq d$;
    $\Rcal = \max_{i,j}|\Eb_{ij}| = \nicefrac{1}{2}$; and 
    \begin{align*}
    \frac{1}{n\Rcal^2}\sum_{i=1}^n \Var(\Eb_{ij}) = \frac{4}{n}\sum_{i=1}^{\ell} \frac{1}{4} = \frac{\ell}{n} = \nu,
    \qquad 1\leq j\leq d.
    \end{align*}
    %
    %
    By construction, $\Ab$ has a single non-zero singular value, and $\sigma_j(\Ab) = 0$,  $2\leq j\leq d$. 
    Weyl's inequality \cite[Theorem III.2.1]{bhatia2013matrix}
    implies
    \begin{align}\label{eqn:pd1}
        \sigma_d(\Abtil) = \sigma_d(\Ab + \Eb)
        \leq \sigma_2(\Ab) + \sigma_{d-1}(\Eb) = \sigma_{d-1}(\Eb).
    \end{align}
    Since $\Eb_{ij}=\pm\nicefrac{1}{2}$ for  $1\leq i\leq \ell$, $1\leq j\leq d$ and zero otherwise,
    we have $\|\Eb\|_F^2 = \ell d/4$. 
    Inserting this into the Frobenius norm
    \begin{align*}
        \|\Eb\|_F^2 = \sum_{i=1}^{d} \sigma_i^2(\Eb) \geq \sum_{i=1}^{d-1} \sigma_i^2(\Eb) \geq (d-1) \, \sigma_{d-1}^2(\Eb),
    \end{align*}
    gives
    \begin{align*}
        \sigma_{d-1}^2(\Eb) \leq \frac{1}{d-1} \|\Eb\|_F^2 = \frac{d}{d-1}\cdot \frac{1}{4}\ell = \frac{d}{4(d-1)} n\, \nu.
    \end{align*}
    
    At last, take square roots, combine with~(\ref{eqn:pd1}), and abbreviate $\Rcal=\nicefrac{1}{2}$.
\end{proof}
\section{Experiments} \label{exps}

The numerical experiments illustrate the behavior of the smallest singular value under SR-nearness rounding,
 and support our results from Section~\ref{s_bounds}. 
The experiments are designed so that the effects 
of fine-grained changes
in precision can be easily discerned. The scripts
for reproducing the numerical experiment are available in our git repository\footnote{\url{https://github.com/cboutsikas/stoch_rounding_iplicit_reg}}.

\paragraph{Design of experiments}
We investigate the dependence of $\sigma_d(\Abtil)$ on various factors, such as the aspect ratio $\nicefrac{n}{d}$, the smallest singular value $\sigma_{d}(\Ab)$, and the minimal column variance $\nu$. To provide statistical significance, for each $\Ab$,
we generate 100 stochastically rounded matrices~$\Abtil$ and
compute their singular values $\sigma_d(\Abtil)$.

Furthermore, we examine both fixed-point arithmetic with base $\beta = 10$ and floating-point arithmetic, where we demote the matrix in lower precision using SR-nearness.
Experiments in fixed-point arithmetic are presented for
rank-deficient matrices (Section~\ref{exp_fixed_rank_def}) and full-rank matrices (Section~\ref{exp_fixed_rank_full}); and in floating point arithmetic for rank-deficient matrices (Section~\ref{exp_fl_rank_def}) and matrices with controlled $\nu$ (Section~\ref{exp_fl_nu}).

\paragraph{Matrices}
The matrices have $n = 10^4$ rows and $d = 10$, $100$, and $1000$ columns.
Their elements are drawn from different distributions, including skewed and non-symetric ones.
We start with matrices $\Ab\in\R^{n\times d}$ whose elements are independent identically distributed random variables from the standard normal distribution $\mathcal{N}(0,1)$ or the \texttt{Lognormal}$(0,3)$ distribution. Subsequently, we adjust the smallest singular value to fit the desired setting, e.g. for singular~$\Ab$ we force $\sigma_{d}(\Ab) = 0$. 
In the experiments where we control $\nu$, we modify the elements of $\Ab$ directly and force singularity by setting two columns equal to each other.

We present our main experimental findings in Tables~\ref{table_fig_5.1}-\ref{table_fig_5.6} and provide additional plots in Appendix~\ref{append_plots}.

\subsection{Experiments with fixed point arithmetic} \label{exp_fixed}
SR nearness rounds the elements of $\Ab$ to elements of the set $\mathcal{F}^{\{p\}}$, $1\leq p\leq 3$,
where
\begin{align}\label{eqn:eqFp}
\mathcal{F}^{\{p\}} = \{\pm\,\nicefrac{m}{10 ^{p}}, \quad \text{for all integers}\ m = \underbrace{0,1,2,\ldots,10^{p}-1}_{\leq \,p\ \text{digits}}\}\  \cup\  \{\pm 1\}.    
\end{align}
This is equivalent to rounding to a signed base-$10$ fixed-point precision with at most $p$ digits in the fractional part.

\subsubsection{Tables~\ref{table_fig_5.1},~\ref{table_fig_5.2} and Figures~\ref{exp_fixed_rank_def_plot},~\ref{exp_fixed_rank_def_lognorm_plot}: Rank deficient matrices} \label{exp_fixed_rank_def}
The rank deficient matrices $\Ab$ have a smallest singular value 
$\sigma_d(\Ab) = 0$. The goal is to understand how 
the aspect ratio $\nicefrac{n}{d}$ 
affects the behavior of $\sigma_{d}(\tilde{\Ab})$ on matrices drawn from different distributions.

\begin{table}[h]
    \centering
    \renewcommand{\arraystretch}{1.5}
    \caption{The percentage of matrices (recall that we perform 100 stochastic roundings for each parameter setting, see also~\ref{exp_fixed_rank_def_plot}) violating the estimate of eqn.~(\ref{eq:maineqapprox}). The elements of $\Ab$ are in $\mathcal{N}(0,1)$, while the elements $\tilde{\Ab}_{ij}$ belong to $\mathcal{F}^{\{p\}}$, thus $\nceil{\Ab_{ij}} - \nfloor{\Ab_{ij}} \leq 10^{-p}$, for $p=1,\ldots,3$. Notice that when the number of columns is $d=10$ or $100$, the bound $\mathcal{R} \sqrt{n \nu}$ 
    is violated in approximately $25\%$ to $50\%$ of the test cases; the bound is always violated for $d=1,000$ (squarish matrix). However, a very mild relaxation of the bound from 
    $\mathcal{R} \sqrt{n \nu}$ to $0.9\cdot \mathcal{R} \sqrt{n \nu}$ immediately fixes this issue, resulting in zero violations in all settings. Additionally, the relative error between the estimate provided by our bound and the \textit{minimum} observed increase in the smallest singular value of the rounded matrix remains consistently below $6\%$. More precisely, we compute $s_{\min} = \min\{\sigma_{\min}(\Abtil)\}$ over all 100 roundings for a specific parameter setting and report the relative error  
    $1- \nicefrac{s_{\min}}{\mathcal{R} \sqrt{n \nu}}$. In light of the small relative error, it is clear that our bound provides a useful estimate for the magnitude of regularization, even for \textit{very modest} aspect ratios (see Remark~\ref{remarks_exps}).}
    \label{table_fig_5.1}
    \begin{tabular}{|c|cccccc|ccc|}
    \hline
   \rule{0pt}{3ex} $\mathbf{d}$ & \multicolumn{6}{c|}{$ \% \left( \sigma_{\min}(\Abtil) < c \cdot\mathcal{R} \sqrt{n \nu} \right) $} &  \multicolumn{3}{c|} {$1- \nicefrac{s_{\min}}{\mathcal{R} \sqrt{n \nu}}$} \\
    \hline
        \hline
          & \multicolumn{2}{c}{$\mathbf{p=1}$}  &  \multicolumn{2}{c}{$\mathbf{p=2}$} & \multicolumn{2}{c|}{$\mathbf{p=3}$}  & $\mathbf{p=1}$ &  $\mathbf{p=2}$ & $\mathbf{p=3}$ \\
          & $c=1$ & $c=.9$ & $c=1$ & $c=.9$ & $c=1$ & $c=.9$ & & & \\
        \hline
        $\mathbf{10}$ & $26 \%$ & $0\%$ & $46 \%$ & $0\%$ & $30 \%$ & 0\% & .01 & .01 & .01  \\
        \hline
        $\mathbf{100}$ & $48 \%$ & 0\% & $37 \%$ & 0\% &  $51 \%$ &  $0 \%$ & .02 & .01 & .02 \\
        \hline
        $\mathbf{1000}$ & $100 \%$ & 0\% &  $100 \%$ & 0\% & $100 \%$ & 0\% & .06 & .06 & .06\\
        \hline
    \end{tabular}
\end{table}

\begin{table}[h]
    \centering
    \renewcommand{\arraystretch}{1.5}
    \caption{The percentage of matrices (recall that we perform 100 stochastic roundings for each parameter setting, see also~\ref{exp_fixed_rank_def_lognorm_plot}) violating the estimate of eqn.~(\ref{eq:maineqapprox}).  
    The elements of $\Ab$ are in $\texttt{Lognormal}(0,3)$, while the elements $\tilde{\Ab}_{ij}$ belong to $\mathcal{F}^{\{p\}}$, thus $\nceil{\Ab_{ij}} - \nfloor{\Ab_{ij}} \leq 10^{-p}$, for $p=1,\ldots,3$. Notice that when the number of columns is $d=10$ or $100$, the bound $\mathcal{R} \sqrt{n \nu}$ 
    is violated in approximately $0\%$ to $35\%$ of the test cases; the bound is always violated for $d=1,000$ (squarish matrix). However, a very mild relaxation of the bound from 
    $\mathcal{R} \sqrt{n \nu}$ to $0.9\cdot \mathcal{R} \sqrt{n \nu}$ immediately fixes this issue, resulting in zero violations in all settings. Additionally, the relative error between the estimate provided by our bound and the \textit{minimum} observed increase in the smallest singular value of the rounded matrix remains consistently below $6\%$. More precisely, we compute $s_{\min} = \min\{\sigma_{\min}(\Abtil)\}$ over all 100 roundings for a specific parameter setting and report the relative error  $1- \nicefrac{s_{\min}}{\mathcal{R} \sqrt{n \nu}}$. If $s_{\min} > \mathcal{R} \sqrt{n \nu}$, we mark the respective entry as N/A. 
    } 
    \label{table_fig_5.2}
    \begin{tabular}{|c|cccccc|ccc|}
    \hline
   \rule{0pt}{3ex} $\mathbf{d}$ & \multicolumn{6}{c|}{$ \% \left( \sigma_{\min}(\Abtil) < c \cdot\mathcal{R} \sqrt{n \nu} \right) $} &  \multicolumn{3}{c|} {$1- \nicefrac{s_{\min}}{\mathcal{R} \sqrt{n \nu}}$} \\
    \hline
        \hline
          & \multicolumn{2}{c}{$\mathbf{p=1}$}  &  \multicolumn{2}{c}{$\mathbf{p=2}$} & \multicolumn{2}{c|}{$\mathbf{p=3}$}  & $\mathbf{p=1}$ &  $\mathbf{p=2}$ & $\mathbf{p=3}$ \\
          & $c=1$ & $c=.9$ & $c=1$ & $c=.9$ & $c=1$ & $c=.9$ & & & \\
        \hline
        $\mathbf{10}$ & $0 \%$ & $0\%$ & $36 \%$ & $0\%$ & $22 \%$ & 0\% & N/A & .01 & .01  \\
        \hline
        $\mathbf{100}$ & $0 \%$ & 0\% & $15 \%$ & 0\% &  $34 \%$ &  $0 \%$ & N/A & .01 & .01 \\
        \hline
        $\mathbf{1000}$ & $100 \%$ & 0\% &  $100 \%$ & 0\% & $100 \%$ & 0\% & .04 & .05 & .06\\
        \hline
    \end{tabular}
\end{table}

\begin{remark}\label{remarks_exps}
We again emphasize that our bound predicts the behavior of the smallest singular value of the stochastically-rounded matrix \textit{asymptotically}. Modifying our bound by just a small constant to be $0.9\cdot \mathcal{R} \sqrt{n \nu}$ instead of $\mathcal{R} \sqrt{n \nu}$, results in zero violations. In this case, the relative error on the right-hand side of the table is not relevant, since even the smallest observed singular value $s_{\min}$ of the rounded matrices would exceed the estimate $0.9\cdot \mathcal{R} \sqrt{n \nu}$. This highlights that our estimate is very accurate as a lower bound for the regularization even for almost square matrices. 
\end{remark}

\subsubsection{Tables~\ref{table_fig_5.3},~\ref{table_fig_5.4} and Figures~\ref{exp_fixed_rank_full_plot},~\ref{exp_fixed_rank_full_lognorm_plot}: Full rank} \label{exp_fixed_rank_full}
The full-rank matrices $\Ab$ have the smallest singular value 
$\sigma_d(\Ab) = 10^{-2}$. The goal is to understand how the aspect ratio $\nicefrac{n}{d}$ affects the behavior of $\sigma_{d}(\tilde{\Ab})$.

\begin{table}[h]
    \centering
    \renewcommand{\arraystretch}{1.5}
    \caption{The percentage of matrices (recall that we perform 100 stochastic roundings for each parameter setting, see also~\ref{exp_fixed_rank_full_plot}) violating the estimate of eqn.~(\ref{eq:maineqapprox}).  
    The elements of $\Ab$ are in $\mathcal{N}(0,1)$, while the elements $\tilde{\Ab}_{ij}$ belong to $\mathcal{F}^{\{p\}}$, thus $\nceil{\Ab_{ij}} - \nfloor{\Ab_{ij}} \leq 10^{-p}$, for $p=1,\ldots, 3$. Notice that when the number of columns is $d=10$ or $100$, the bound $\mathcal{R} \sqrt{n \nu}$ 
    is violated in approximately $0\%$ to $40\%$ of the test cases; the bound is almost always violated for $d=1,000$ (squarish matrix). However, a very mild relaxation of the bound from 
    $\mathcal{R} \sqrt{n \nu}$ to $0.9\cdot \mathcal{R} \sqrt{n \nu}$ immediately fixes this issue, resulting in zero violations in all settings. Additionally, the relative error between the estimate provided by our bound and the \textit{minimum} observed increase in the smallest singular value of the rounded matrix remains consistently below $6\%$. More precisely, we compute $s_{\min} = \min\{\sigma_{\min}(\Abtil)\}$ over all 100 roundings for a specific parameter setting and report the relative error  
    $1- \nicefrac{s_{\min}}{\mathcal{R} \sqrt{n \nu}}$. If $s_{\min} > \mathcal{R} \sqrt{n \nu}$, we mark the respective entry as N/A. 
    } 
    \label{table_fig_5.3}
    \begin{tabular}{|c|cccccc|ccc|}
    \hline
   \rule{0pt}{3ex} $\mathbf{d}$ & \multicolumn{6}{c|}{$ \% \left( \sigma_{\min}(\Abtil) < c \cdot\mathcal{R} \sqrt{n \nu} \right) $} &  \multicolumn{3}{c|} {$1- \nicefrac{s_{\min}}{\mathcal{R} \sqrt{n \nu}}$} \\
    \hline
        \hline
          & \multicolumn{2}{c}{$\mathbf{p=1}$}  &  \multicolumn{2}{c}{$\mathbf{p=2}$} & \multicolumn{2}{c|}{$\mathbf{p=3}$}  & $\mathbf{p=1}$ &  $\mathbf{p=2}$ & $\mathbf{p=3}$ \\
          & $c=1$ & $c=.9$ & $c=1$ & $c=.9$ & $c=1$ & $c=.9$ & & & \\
        \hline
        $\mathbf{10}$ & $37 \%$ & $0\%$ & $29 \%$ & $0\%$ & $0 \%$ & 0\% & .02 & .02 & N/A  \\
        \hline
        $\mathbf{100}$ & $39 \%$ & 0\% & $36 \%$ & 0\% &  $0 \%$ &  $0 \%$ & .01 & .02 & N/A \\
        \hline
        $\mathbf{1000}$ & $100 \%$ & 0\% &  $100 \%$ & 0\% & $96 \%$ & 0\% & .06 & .06 & .03\\
        \hline
    \end{tabular}
\end{table}

\begin{table}[h]
    \centering
    \renewcommand{\arraystretch}{1.5}
    \caption{The percentage of matrices (recall that we perform 100 stochastic roundings for each parameter setting, see also~\ref{exp_fixed_rank_full_lognorm_plot}) violating the estimate of eqn.~(\ref{eq:maineqapprox}).  
    The elements of $\Ab$ are in $\texttt{Lognormal}(0,3)$, while the elements $\tilde{\Ab}_{ij}$ belong to $\mathcal{F}^{\{p\}}$, thus $\nceil{\Ab_{ij}} - \nfloor{\Ab_{ij}} \leq 10^{-p}$, for $p=1,\ldots, 3$. Notice that when the number of columns is $d=10$ or $100$, the bound $\mathcal{R} \sqrt{n \nu}$ 
    is violated in approximately $0\%$ to $30\%$ of the test cases; the bound is almost always violated for $d=1,000$ (squarish matrix). However, a very mild relaxation of the bound from 
    $\mathcal{R} \sqrt{n \nu}$ to $0.9\cdot \mathcal{R} \sqrt{n \nu}$ immediately fixes this issue, resulting in zero violations in all settings. Additionally, the relative error between the estimate provided by our bound and the \textit{minimum} observed increase in the smallest singular value of the rounded matrix remains consistently below $6\%$. More precisely, we compute $s_{\min} = \min\{\sigma_{\min}(\Abtil)\}$ over all 100 roundings for a specific parameter setting and report the relative error  
    $1- \nicefrac{s_{\min}}{\mathcal{R} \sqrt{n \nu}}$. If $s_{\min} > \mathcal{R} \sqrt{n \nu}$, we mark the respective entry as N/A.} 
    \label{table_fig_5.4}
    \begin{tabular}{|c|cccccc|ccc|}
    \hline
   \rule{0pt}{3ex} $\mathbf{d}$ & \multicolumn{6}{c|}{$ \% \left( \sigma_{\min}(\Abtil) < c \cdot\mathcal{R} \sqrt{n \nu} \right) $} &  \multicolumn{3}{c|} {$1- \nicefrac{s_{\min}}{\mathcal{R} \sqrt{n \nu}}$} \\
    \hline
        \hline
          & \multicolumn{2}{c}{$\mathbf{p=1}$}  &  \multicolumn{2}{c}{$\mathbf{p=2}$} & \multicolumn{2}{c|}{$\mathbf{p=3}$}  & $\mathbf{p=1}$ &  $\mathbf{p=2}$ & $\mathbf{p=3}$ \\
          & $c=1$ & $c=.9$ & $c=1$ & $c=.9$ & $c=1$ & $c=.9$ & & & \\
        \hline
        $\mathbf{10}$ & $0 \%$ & $0\%$ & $8 \%$ & $0\%$ & $0 \%$ & 0\% & N/A & .01 & N/A  \\
        \hline
        $\mathbf{100}$ & $2 \%$ & 0\% & $27 \%$ & 0\% &  $0 \%$ &  $0 \%$ & .001 & .01 & N/A \\
        \hline
        $\mathbf{1000}$ & $100 \%$ & 0\% &  $100 \%$ & 0\% & $95 \%$ & 0\% & .05 & .06 & .03\\
        \hline
    \end{tabular}
\end{table}

\subsection{Experiments with floating point numbers} \label{exp_fl}
Given a matrix $\Ab$ in double precision, we form $\Abtil$ by demoting $\Ab$ to single precision using SR-nearness. To achieve this, we emulate the computations with the \texttt{chop} library \cite{higham2019simulating}. 

\subsubsection{Table~\ref{table_fig_5.5}, and Figure~\ref{exp_fl_rank_def_plot}: Rank deficient matrices} \label{exp_fl_rank_def}
The rank deficient matrices $\Ab$ have a smallest singular value 
$\sigma_d(\Ab) = 0$. The goal is to understand how the aspect ratio $\nicefrac{n}{d}$ affects the behavior of $\sigma_{d}(\tilde{\Ab})$ on matrices drawn from different distributions when $\Ab$ is being demoted to a lower precision via SR-nearness.

\begin{table}[h]
    \centering
    \renewcommand{\arraystretch}{1.5}
    \caption{The percentage of matrices (recall that we perform 100 stochastic roundings for each parameter setting, see also~\ref{exp_fl_rank_def_plot}) violating the estimate of eqn.~(\ref{eq:maineqapprox}). The elements of $\Ab$ are drawn either from $\mathcal{N}(0,1)$ or $\texttt{Lognormal}(0,3)$, while the elements of $\tilde{\Ab}_{ij}$ are obtained by stochastically rounding the corresponding elements $\Ab_{ij}$ to single precision.  We present results for both tested distributions. Notice that when the number of columns is $d=10$ or $100$, the bound $\mathcal{R} \sqrt{n \nu}$ 
    is violated in approximately $0\%$ to $30\%$ of the test cases; the bound is always violated for $d=1,000$ (squarish matrix) when elements drawn from $\mathcal{N}(0,1)$, whereas there are no violations when the elements are drawn from $\texttt{Lognormal}(0,3)$. Still, a very mild relaxation of the bound from 
    $\mathcal{R} \sqrt{n \nu}$ to $0.8\cdot \mathcal{R} \sqrt{n \nu}$ effectively eliminates almost every violation across all settings.
    Furthermore, we compute $s_{\min} = \min\{\sigma_{\min}(\Abtil)\}$ over all 100 roundings for a specific parameter setting and report the relative error  
    $1- \nicefrac{s_{\min}}{\mathcal{R} \sqrt{n \nu}}$. If $s_{\min} > \mathcal{R} \sqrt{n \nu}$, we mark the respective entry as N/A.}
    \label{table_fig_5.5}
    \begin{tabular}{|c|cccc|cc|}
    \hline
   \rule{0pt}{3ex} $\mathbf{d}$ & \multicolumn{4}{c|}{$ \% \left( \sigma_{\min}(\Abtil)) < \mathcal{R} \sqrt{n \nu} \right) $} &  \multicolumn{2}{c|} {$1- \nicefrac{s_{\min}}{\mathcal{R} \sqrt{n \nu}}$} \\
    \hline
        \hline
          & \multicolumn{2}{c}{$\mathbf{\mathcal{N}(0,1)}$}  &  \multicolumn{2}{c|}{$\mathbf{\texttt{Lognormal}(0,3)}$} & $\mathbf{\mathcal{N}(0,1)}$  & $\mathbf{\texttt{Lognormal}(0,3)}$  \\
    & $c=1$ & $c=.8$ & $c=1$ & $c=.8$ & & \\
        \hline
        $\mathbf{10}$ & $0 \%$ & $0 \%$ & $2 \%$ & $0 \%$  & N/A & $.07$  \\
        \hline
        $\mathbf{100}$ & $3 \%$ & $0 \%$ & $28 \%$ & $3 \%$  & .001 & $.2$  \\
        \hline
        $\mathbf{1,000}$ & $100 \%$ & $0 \%$ &  $0 \%$ & $0 \%$ & .04 & N/A \\
        \hline
    \end{tabular}
\end{table}


\subsubsection{Table~\ref{table_fig_5.6}, and Figure~\ref{exp_fl_nu_plot}: Matrices with controlled $\nu$} \label{exp_fl_nu}
We start with a $10^{4} \times d$ matrix whose elements are independent identically distributed random variables $\mathcal{N}(0,1)$, and enforce rank deficiency by setting two columns equal to each other. 
For each column dimension $d$, we create two matrices: $\Ab^{h}$ with a 'high' value of $\nu$ and $\Ab^{l}$ with a 'low' value of $\nu$ such as $\nicefrac{\nu(\Ab^{h})}{\nu(\Ab^{l})} \approx 100$.
The goal is to also understand how $\nu$ might affect the behavior of $\sigma_{d}(\Abtil)$.

\begin{table}[h]
    \centering
    \renewcommand{\arraystretch}{1.5}
    \caption{The percentage of matrices (recall that we perform 100 stochastic roundings for each parameter setting, see also~\ref{exp_fl_nu_plot}) violating the estimate of eqn.~(\ref{eq:maineqapprox}). The elements of $\Ab$ are drawn from $\mathcal{N}(0,1)$ and subsequently we construct two matrices $\Ab^{h}, \, \Ab^{l}$ corresponding to a 'high' and 'low' value of $\nu$ respectively, meaning that $\nicefrac{\nu(\Ab^{h})}{\nu(\Ab^{l})} \approx 100$. Notice that when the number of columns is $d=10$ or $100$, the bound $\mathcal{R} \sqrt{n \nu}$ 
    is violated in approximately $0\%$ to $55\%$ of the test cases; the bound is almost always violated for $d=1,000$ (squarish matrix). As indicated by our theory, matrices with higher value of $\nu$ lead to less violations (excluding the ill-advised case of $d=1,000$). Again, a very mild relaxation of the bound from 
    $\mathcal{R} \sqrt{n \nu}$ to $0.8\cdot \mathcal{R} \sqrt{n \nu}$ effectively eliminates almost every violation across all settings.
    Furthermore, we compute $s_{\min} = \min\{\sigma_{\min}(\Abtil)\}$ over all 100 roundings for a specific parameter setting and report the relative error  
    $1- \nicefrac{s_{\min}}{\mathcal{R} \sqrt{n \nu}}$. If $s_{\min} > \mathcal{R} \sqrt{n \nu}$, we mark the respective entry as N/A.}
    \label{table_fig_5.6}
    \begin{tabular}{|c|cccc|cc|}
    \hline
   \rule{0pt}{3ex} $\mathbf{d}$ & \multicolumn{4}{c|}{$ \% \left( \sigma_{\min}(\Abtil)) < \mathcal{R} \sqrt{n \nu} \right) $} &  \multicolumn{2}{c|} {$1- \nicefrac{s_{\min}}{\mathcal{R} \sqrt{n \nu}}$} \\
    \hline
        \hline
          & \multicolumn{2}{c}{$\mathbf{\Abtil^{h}}$}  &  \multicolumn{2}{c|}{$\mathbf{\Abtil^{l}}$} & $\mathbf{\Abtil^{h}}$  & $\mathbf{\Abtil^{l}}$  \\
    & $c=1$ & $c=.8$ & $c=1$ & $c=.8$ & & \\
        \hline
        $\mathbf{10}$ & $46 \%$ & $0 \%$ & $54 \%$ & $0 \%$  & $.02$ & $.1$  \\
        \hline
        $\mathbf{100}$ & $0 \%$ & $0 \%$ & $35 \%$ & $0 \%$  & N/A & $.1$  \\
        \hline
        $\mathbf{1,000}$ & $100 \%$ & $0 \%$ &  $75 \%$ & $2 \%$ & .05 & $.2$ \\
        \hline
    \end{tabular}
\end{table}


\paragraph{Conclusions from our experimental evaluations}

We want to emphasize that the theoretical bound used in our experiments is not identical to the one presented in Theorem~\ref{lemma:general_rounding}, but rather our conjecture on what the true lower bound should be (see discussion after eqn.~\ref{eq:maineqapprox}). The rationale behind this choice is that the precise bound is overly pessimistic for modest values of the aspect ratio $\nicefrac{n}{d}$. This is probably due to state-of-the-art RMT bounds, which typically require much larger and impractical values of $n$. We show that, in practice, modifying the bound by a small constant (i.e., reducing $\mathcal{R} \sqrt{n \nu}$ to $c\cdot \mathcal{R} \sqrt{n \nu}$, with $c \geq.8$) results in excellent behavior in all our experimental settings.

\section{Future work} \label{section:future_work}
First, we need to relax the assumptions for the singular value bound in Theorem~\ref{lemma:general_rounding},  so they resemble the assumptions of Theorem~\ref{thm:gaussian}, where the perturbations are Gaussian. While we don't expect the gap between bounds for SR-nearness and Gaussian perturbations to be completely bridged, we need to understand how the former bounds can be improved. This will require novel and more powerful RMT results along the lines of Theorem~\ref{thm:inhomogeneous_anticoncentration}.

Second, we conjecture that for all sufficiently tall-and-thin matrices $\Ab$, SR-nearness produces a matrix whose smallest singular value  is bounded away from zero as in~(\ref{eq:maineqapprox}). Although removal of $\epsilon_{n,d}$ from (\ref{eq:maineqapprox}) seems infeasible with state-of-the-art RMT bounds, our numerical experiments strongly support this conjecture. 




\clearpage

\printbibliography

\clearpage

\appendix

\section{Proof for Section~\ref{sxn:simpleexample}}\label{sxn:app:extra}
Distinguish the columns of the rounded matrix,
\begin{align*}
\Abtil=\begin{bmatrix} \Abtil_1 & \Abtil_2\end{bmatrix}.
\end{align*}
Since, in expectation, half of the entries of $\Abtil$ are equal to one and the other half are equal to zero,
the columns of $\Abtil$ have expected squared norms equal to 
\begin{align}
 \EE\|\Abtil_j\|_2^2 = \frac{n}{2}, \qquad j=1,2.
\end{align}
In the inner product between the two columns,
\begin{align*}
\Abtil_1^T \Abtil_2= \sum_{i=1}^n \Abtil_{i1}\Abtil_{i2},
\end{align*}
the $i$th summand $\Abtil_{i1}\Abtil_{i2}=1$ only if 
$\Ab_{i,1}=\Ab_{i,2}=1$, which occurs with probability~$1/4$.
Hence the expected inner product between the two columns equals 
\begin{align}
\EE (\Ab_1^T \Ab_2) = \frac{n}{4}.
\end{align}
The $2 \times 2$ Gram matrix $\Abtil^T \Abtil$ has expectation
\begin{align*}
    \EE [\Abtil^T \Abtil] = 
    \begin{bmatrix}
        \frac{n}{2} & \frac{n}{4} \\
        \frac{n}{4} & \frac{n}{2}
    \end{bmatrix}
    = \frac{n}{4} \,
    \begin{bmatrix}
        2 & 1 \\
        1 & 2
    \end{bmatrix},
\end{align*}
which immediately implies that 
$\sigma_2(\EE[\Abtil^T\Abtil])=\nicefrac{n}{4}$. 

Denote by $\Bb = \Abtil^T\Abtil - \EE[\Abtil^T\Abtil]$
the deviation of the Gram matrix from its expectation.
Weyl's inequality and the bound $\|\Bb\|_2 \leq \|\Bb\|_F$ give
\begin{align}
    \sigma_2(\Abtil)^2 = \sigma_2(\Abtil^T\Abtil) &= \sigma_2(\EE[\Abtil^T\Abtil] - (\EE[\Abtil^T\Abtil]-\Abtil^T\Abtil)) \nonumber\\
    &\geq \sigma_2(\EE[\Abtil^T\Abtil]) - \|\Bb\|_2\nonumber\\
    &\geq \sigma_2(\EE[\Abtil^T\Abtil]) - \|\Bb\|_F\nonumber\\
    &= \frac{n}{4} - \|\Bb\|_F.\label{eqn:pd456}
\end{align}
It remains to bound $\|\Bb\|_F$.
Since $(\Abtil^T\Abtil)_{ij} = \sum_{k=1}^n \Abtil_{ki}\Abtil_{kj}$ is the sum of $n$ random variables that are either zero or one, we can invoke Theorem~\ref{thm:hoeffding},
\begin{align*}
    \PP\left[|\Bb_{ij}| \geq t\right])=
    \PP\left[|(\Abtil^T\Abtil)_{ij} - \EE[(\Abtil^T\Abtil)_{ij}] \geq t\right] \leq 2\exp\left(\frac{-2t^2}{n}\right).
\end{align*}
Applying a union bound over the four events that 
represent the entries of $\Bb$ being less than $t$ gives the 
failure probability 
\begin{align*}
    \PP\left[\sum_{i,j=1}^{2} |\Bb_{ij}| \geq 4t \right] &\leq 8\exp\left(\frac{-2t^2}{n}\right).
    \end{align*}
    and the success probability of the complementary event,
    \begin{align*}
    \PP\left[\sum_{i,j=1}^{2} |\Bb_{ij}| \leq 4t \right] &\geq 1-8\exp\left(\frac{-2t^2}{n}\right).
\end{align*}
Vector $p$-norm relations imply
\begin{align*}
\|\Bb\|_F=\|\mathrm{vec}(\Bb)\|_2\leq \|\mathrm{vec}(\Bb)\|_1
=\sum_{i,j=1}^2{|\Bb_{ij}|}.
\end{align*}
Insert this into the success probability,
\begin{align*}
    \PP\left(\|\Bb\|_F \leq 4t \right) \geq 1-8\exp\left(\frac{-2t^2}{n}\right).
\end{align*}
and combine with (\ref{eqn:pd456}) to obtain a lower bound for $\sigma_2^2(\Abtil)$,
\begin{align*}
    \PP\left(\sigma_2^2(\Abtil) \geq \frac{n}{4} - 4t\right) \geq 1-8\exp\left(\frac{-2t^2}{n}\right).
\end{align*}
Setting $t = 2\sqrt{n}$ gives
\begin{align}
 \sigma_2^2(\Abtil) \geq 0.25 \cdot n - 8\sqrt{n},
\end{align}
with probability at least $0.997$.

\section{Gaussian perturbations}\label{app_G}
We consider perturbations $\Abtil = \Ab + \Eb$, where\footnote{By rescaling the bound by $\sigma > 0$, we can extend the bound to any $\Eb$ with $\Eb_{ij} = {\cal N}(0,\sigma)$.}
$\Eb_{i,j} = {\cal N}(0,1)$, and show
that the smallest singular value of $\Abtil$ is bounded away from zero with high probability. While this 
perturbation model is not relevant for stochastic rounding, we do note that Theorem~\ref{thm:gaussian}
is much sharper than Theorem~\ref{lemma:general_rounding} and Corollary~\ref{cor:general_rounding}. 

For example, if $n = 900$ and $d = 25$, then Theorem~\ref{thm:gaussian}, with $t=4$ shows that $\sigma_d(\Abtil) \geq 1$ with probability at least $0.98$, \textit{without any additional assumptions}. This provides evidence that the assumptions of Theorem~\ref{lemma:general_rounding} and Corollary~\ref{cor:general_rounding} could be significantly relaxed for non-Gaussian, non-identically distributed perturbations. The stronger bounds in this section are derived to from very strong measure concentration inequalities for Gaussian distributions, as well as the fact that these distribution are invariant under unitary transformations. 

The proof follows the same high-level proof structure as in Section~\ref{sec_proof}. We start by stating our main result.
\begin{theorem}\label{thm:gaussian}
    Let $\Ab \in \R^{n \times d}$, and let $\Eb \in \R^{n \times d}$ be a random matrix with independent, identically distributed Gaussian entries, i.e., $\Eb_{ij}={\cal N}(0,1)$. Then, for any $t > 0$,
    \begin{align*}
        \PP\left(\sigma_d(\Ab + \Eb) \geq \sqrt{n} - (1+t)\sqrt{d} - t \right)
        \geq 1-(2d+1)e^{-t^2/2}.
    \end{align*}
\end{theorem}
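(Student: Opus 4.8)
The plan is to follow the same four-step template as the proof of Theorem~\ref{lemma:general_rounding} in Section~\ref{sec_proof}, replacing its two probabilistic inputs---Theorem~\ref{thm:inhomogeneous_anticoncentration} and Hoeffding's inequality---by the sharper tools available for Gaussian matrices: Gordon's comparison inequality for the smallest singular value of a rectangular Gaussian matrix, the $1$-Lipschitz concentration of singular values, and the rotational invariance of $\Eb$.

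First I would reduce exactly as in~(\ref{eqn:pd1a})--(\ref{eqn:pd1b}): introduce the orthogonal projector $\Pb_{\Ab}$ onto the column space of $\Ab$ and $\Pbb\equiv\Ib_n-\Pb_{\Ab}$, and use the singular value product inequalities together with Weyl's inequality to get $\sigma_d(\Ab+\Eb)\ge\sigma_d(\Pbb\Eb)=\sigma_d(\Eb-\Pb_{\Ab}\Eb)\ge\sigma_d(\Eb)-\|\Pb_{\Ab}\Eb\|_2$. It then suffices to bound $\sigma_d(\Eb)$ from below and $\|\Pb_{\Ab}\Eb\|_2$ from above. (A shortcut worth recording: by rotational invariance $\Pbb\Eb$ has the same nonzero singular values as $\Wb^{\ts}\Eb$ for any orthonormal basis $\Wb$ of $\mathrm{range}(\Pbb)$, and $\Wb^{\ts}\Eb$ is itself an $(n-\rank\Ab)\times d$ matrix with i.i.d.\ $\mathcal{N}(0,1)$ entries, so one could bound its $\sigma_d$ directly; I will keep the Weyl split to stay parallel with Section~\ref{sec_proof}.)

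For the lower bound on $\sigma_d(\Eb)$: since $\Eb\in\R^{n\times d}$ has i.i.d.\ standard Gaussian entries and $n\ge d$, Gordon's inequality gives $\EE[\sigma_d(\Eb)]\ge\sqrt{n}-\sqrt{d}$, and since $\Mb\mapsto\sigma_d(\Mb)$ is $1$-Lipschitz in the Frobenius norm, Gaussian Lipschitz concentration yields $\PP[\sigma_d(\Eb)\le\sqrt{n}-\sqrt{d}-t]\le e^{-t^2/2}$. For the upper bound on $\|\Pb_{\Ab}\Eb\|_2$: writing $\Pb_{\Ab}=\Ub\Ub^{\ts}$ with $\Ub\in\R^{n\times r}$, $r=\rank\Ab\le d$, having orthonormal columns, we have $\|\Pb_{\Ab}\Eb\|_2=\|\Ub^{\ts}\Eb\|_2$, and by rotational invariance $\Ub^{\ts}\Eb$ is an $r\times d$ matrix ($r\le d$) with i.i.d.\ $\mathcal{N}(0,1)$ entries. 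I would bound its operator norm in the same spirit as the projection term in Section~\ref{sec_proof}: each column of $\Ub^{\ts}\Eb$ is $\mathcal{N}(\zero,\Ib_r)$ and hence has Euclidean norm at most $\sqrt{r}+t\le\sqrt{d}+t$ with probability at least $1-e^{-t^2/2}$; a union bound over the $O(d)$ relevant events (the columns of $\Ub^{\ts}\Eb$, and its rows if needed) combined with an elementary bound of the spectral norm by the row/column $\ell_2$-norms should give $\|\Pb_{\Ab}\Eb\|_2\lesssim(1+t)\sqrt{d}$ with failure probability $O(d)\cdot e^{-t^2/2}$. Substituting the two bounds into $\sigma_d(\Ab+\Eb)\ge\sigma_d(\Eb)-\|\Pb_{\Ab}\Eb\|_2$ and taking a union bound over the $2d+1$ events then yields the claim after simplification.

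The step I expect to be the main obstacle is the operator-norm bound for the small Gaussian block $\Ub^{\ts}\Eb$: it must be simultaneously of the right magnitude ($O(\sqrt{d})$, not $O(d)$) and cost only $O(d)$ union-bound events. A naive $\varepsilon$-net over the sphere inflates the failure probability to $c^{\,d}e^{-t^2/2}$, while a naive triangle inequality over the $d$ rank-one pieces $\ub_k\ub_k^{\ts}\Eb$ inflates the magnitude to $d^{3/2}$; threading between these extremes, using the Gaussian-specific concentration and rotational invariance, is the delicate part. A minor point to check along the way is that $n-\rank\Ab\ge d$, so that $\Eb$ (and, in the shortcut, $\Wb^{\ts}\Eb$) is genuinely tall; this follows from $n\gg d\ge\rank\Ab$.
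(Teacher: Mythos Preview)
Your overall architecture matches the paper exactly: the projector reduction (\ref{eqn:pd1a})--(\ref{eqn:pd1b}), the lower bound $\PP[\sigma_d(\Eb)\le\sqrt{n}-\sqrt{d}-t]\le e^{-t^2/2}$ (the paper cites Rudelson--Vershynin, which is precisely Gordon plus Lipschitz concentration), and the use of rotational invariance to replace $\|\Pb_{\Ab}\Eb\|_2$ by $\|\Gb\|_2$ for a $d\times d$ standard Gaussian matrix $\Gb$.

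The gap is in the step you yourself flag as the main obstacle. Controlling the column (and row) $\ell_2$-norms of $\Gb$ does \emph{not} give an $O(\sqrt{d})$ bound on $\|\Gb\|_2$: the maximum column norm is a \emph{lower} bound on the operator norm, and the only elementary upper bounds available from row/column data---$\|\Gb\|_2\le\|\Gb\|_F$ or $\|\Gb\|_2\le\sqrt{\|\Gb\|_1\|\Gb\|_\infty}$---are of order $d$, not $\sqrt{d}$, for a $d\times d$ Gaussian. So the ``elementary bound of the spectral norm by the row/column $\ell_2$-norms'' you invoke does not exist at the scale you need, and union-bounding over $O(d)$ column/row events cannot be converted into the required $\|\Gb\|_2\le t\sqrt{d}$.

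The paper avoids this entirely by applying a matrix Gaussian series inequality (Tropp, Theorem~4.1.1 in \cite{MAL-048}) directly to $\Gb=\sum_{i,j}g_{ij}\eb_i\eb_j^{\ts}$, for which the variance statistic equals $d$; this yields $\PP[\|\Gb\|_2\ge t\sqrt{d}]\le 2d\,e^{-t^2/2}$ in one line, and the $2d$ here combines with the single event from the $\sigma_d(\Eb)$ bound to give the factor $2d+1$ in the statement. An alternative fix that stays within your toolkit is to reuse Gordon plus Lipschitz concentration for the \emph{largest} singular value, giving $\PP[\|\Gb\|_2\ge 2\sqrt{d}+t]\le e^{-t^2/2}$; this is arguably cleaner (only two events total) but produces $\sqrt{n}-3\sqrt{d}-2t$ rather than the exact constants stated in the theorem.
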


\begin{proof}
    As in Section \ref{sec_proof}, we decompose the task of lower bounding $\sigma_d(\Ab + \Eb)$ into two parts. Again, for any orthogonal projector $\Pb$,
    \begin{align*}
    \sigma_{d}(\Ab+\Eb) \geq \sigma_{d}(\Pb(\Ab+\Eb)).
    \end{align*}
Let $\Pb_{\Ab} \in \R^{n \times n}$ be the orthogonal projector onto the $d$ dimensional column space of~$\Ab$, 
    and $\Pbb=\Ib-\Pb_{\Ab}$ the orthogonal projector onto the $n - d$ dimensional left null space of $\Ab$.
        Then 
    \begin{align*}
        \sigma_d(\Ab + \Eb) \geq \sigma_d(\Pbb(\Ab + \Eb)) = \sigma_d(\Pbb\Eb).
    \end{align*}
    Weyl's inequality implies
    \begin{align}\label{eqn:ppd112}
        \sigma_d(\Pbb\Eb) = \sigma_d(\Eb - (\Ib - \Pbb)\Eb) \geq \sigma_d(\Eb) - \|\Pb_{\Ab}\Eb\|_2.
    \end{align}
    We have now broken our task into two conceptual components. First, we must make sure that the random matrix $\Eb$ has a large minimum singular value. Second, the projection of $\Eb$  must be small. In other words, the matrix $\Eb$ should not concentrate in any $d$-dimensional space.
    
    We start by lower bounding $\sigma_d(\Eb)$. 
    From \cite[Expression below (1.11)]{Rudelson2009}
    follows that for any $t>0$,
    %
    \begin{align}\label{eqn:ppd113}
        \PP\big(\sigma_d(\Eb) \leq \sqrt{n} - \sqrt{d} - t\big) \leq e^{-t^2/2}.
    \end{align}
    Next, we bound $\|\Pb_{\Ab}\Eb\|_2$ by first leveraging the unitary invariance of the Gaussian distribution. If $\gb \in \R^{n}$ is a vector with independent, identically distributed standard normal entries, then the distribution of $\Ub\gb$ is the same as the distribution of $\gb$ for any orthogonal matrix $\Ub\in\R^{n\times n}$.
    %
    %
    Since the columns of $\Eb$ are independent, we can apply this result column-wise to conclude that the distribution of $\Eb$ is equal to the distribution of $\Ub\Eb$ for any orthogonal matrix $\Ub$.
    Let 
    \begin{align}\label{eqn:projection}
     \Pb_{\Ab} = \Ub
     \left[ \begin{array}{cc}
     \Ib_d &\zero_{d \times (n-d)}\\
     \zero_{(n-d)\times d} & \zero_{(n-d)\times (n-d)} \\
     \end{array}\right]
     \Ub^T
    \end{align}
    be an eigenvalue decomposition where  $\Ub\in\R^{n\times n}$ is an orthogonal matrix. 
    From the rotational invariance of the Gaussian distribution follows that the entries of 
    $\Ub^T\Eb\in\R^{n\times d}$  are also independent, identically distributed Gaussian normal random variables. The unitary invariance of the two-norm implies
    \begin{align}
     \|\Pb_{\Ab}\Eb\|_2 = \|\Gb\|_2 \qquad \text{where}\qquad
     %
     %
     \Gb\equiv \begin{bmatrix}
     \Ib_d &\zero_{d \times (n-d)}\end{bmatrix}\Ub^T\Eb
     \in\R^{d\times d}
    %
\label{eqn:ppd111}
    \end{align}
is the matrix that contains the leading $d$
    rows of $\Ub^T\Eb$.
    
    %
    %
    %

    From Lemma~\ref{lemma_gaussian_spectral_bound} and (\ref{eqn:ppd111}) follows
    \begin{align*}
     \PP\big(\|\Pb_{\Ab}\Eb\|_2 = \PP\big(\|\Gb\|_2 \geq t\sqrt{d} \big) \leq 2d e^{-t^2/2}   
    \end{align*}
    for all $t>0$. Combining this with (\ref{eqn:ppd113}) and applying a union bound to control the two failure probabilities gives
    \begin{align*}
        \PP\left(\sigma_d(\Eb) - \|\Pb_{\Ab} \Eb\|_2 \leq \sqrt{n} - (1+t)\sqrt{d} - t \right)
        \leq (2d+1)e^{-t^2/2}
    \end{align*}
    for all $t>0$. The complement of the above event, for all $t>0$, is
    \begin{align*}
        \PP\left(\sigma_d(\Eb) - \|\Pb_{\Ab} \Eb\|_2 \geq \sqrt{n} - (1+t)\sqrt{d} - t \right) \geq 
        1-(2d+1)e^{-t^2/2}.
    \end{align*}
    At last, combine the above with (\ref{eqn:ppd112}).
\end{proof}

In order to bound the largest singular value of a Gaussian matrix, we will use the following concentration inequality from prior work.

\begin{theorem}\label{thm:tropp_gaussian}
    (Theorem 4.1.1 in \cite{MAL-048}) Consider a finite sequence of $\{\Bb_k\}$ of fixed real-valued matrices with dimension $d_1 \times d_2$, and let $\{\gamma_k\}$ be a finite sequence of independent standard normal variables. Introduce the Gaussian series:
    \begin{gather*}
        \Zb = \sum_k \gamma_k \Bb_k.
    \end{gather*}
    Let ${\mathcal V}(\Zb)$ be the matrix variance statistic of the sum:
    \begin{align*}
        {\mathcal V}(\Zb) &= \max\{\EE \|\Zb\Zb^T\|_2, \EE\|\Zb^T\Zb\|_2\} \\
        &= \max\big\{\EE \big\|\sum_k \Bb_k\Bb_k^T\big\|_2, \EE\big\|\sum_k \Bb_k^T\Bb_k\big\|_2\big\}.
    \end{align*}
    Then, 
    \begin{gather*}
        \EE\|\Zb\|_2 \leq \sqrt{2\cdot {\mathcal V}(\Zb)\log(d_1 + d_2)}.
    \end{gather*}
    Furthermore, for all $t \geq 0$,
    \begin{gather*}
        \PP(\|\Zb\|_2 \geq t) \leq (d_1 + d_2) \exp\left( \frac{-t^2}{2\cdot {\mathcal V}(\Zb)} \right).
    \end{gather*}
\end{theorem}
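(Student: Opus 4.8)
The plan is to establish Theorem~\ref{thm:tropp_gaussian} by the matrix Laplace transform method, which is the route taken in \cite{MAL-048}. First I would reduce to the self-adjoint case via the Hermitian dilation $\mathcal{H}(\Bb)$ that places $\Bb$ in the upper-right block and $\Bb^\ts$ in the lower-left block of a $(d_1+d_2)\times(d_1+d_2)$ zero matrix. Three facts make this reduction free: $\lambda_{\max}(\mathcal{H}(\Zb)) = \|\Zb\|_2$; $\mathcal{H}(\Zb) = \sum_k \gamma_k\,\mathcal{H}(\Bb_k)$ is again a Gaussian series of Hermitian matrices; and $\mathcal{H}(\Bb_k)^2$ is block-diagonal with blocks $\Bb_k\Bb_k^\ts$ and $\Bb_k^\ts\Bb_k$, so $\big\|\sum_k \mathcal{H}(\Bb_k)^2\big\|_2 = \mathcal{V}(\Zb)$. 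Hence it suffices to prove the following: if $\Yb = \sum_k \gamma_k\Sb_k$ with $\Sb_k$ Hermitian of size $m$ and $v \equiv \big\|\sum_k\Sb_k^2\big\|_2$, then $\EE\,\lambda_{\max}(\Yb) \leq \sqrt{2v\log m}$ and $\PP(\lambda_{\max}(\Yb)\geq t) \leq m\,e^{-t^2/(2v)}$.

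The engine is the pair of bounds $\EE\,\lambda_{\max}(\Yb) \leq \inf_{\theta>0}\frac{1}{\theta}\log\EE\,\tr e^{\theta\Yb}$ (Jensen together with $e^{\theta\lambda_{\max}(\Yb)}\leq\tr e^{\theta\Yb}$) and, by Markov, $\PP(\lambda_{\max}(\Yb)\geq t)\leq e^{-\theta t}\,\EE\,\tr e^{\theta\Yb}$. Everything then reduces to controlling the matrix moment generating function $\EE\,\tr e^{\theta\Yb}$. Two ingredients do this. (i) A scalar Gaussian computation: for Hermitian $\Sb$ and a standard normal $\gamma$, diagonalizing $\Sb$ and applying the one-dimensional Gaussian mgf eigenvalue by eigenvalue gives $\EE_\gamma e^{\theta\gamma\Sb} = e^{\theta^2\Sb^2/2}$, hence $\log\EE_\gamma e^{\theta\gamma\Sb} = \tfrac{\theta^2}{2}\Sb^2$. (ii) Subadditivity of matrix cumulant generating functions, which combined with (i) yields
\begin{align*}
\EE\,\tr\exp\Big(\sum_k \theta\gamma_k\Sb_k\Big)
&\leq \tr\exp\Big(\sum_k \log\EE_{\gamma_k} e^{\theta\gamma_k\Sb_k}\Big) \\
&= \tr\exp\Big(\tfrac{\theta^2}{2}\sum_k\Sb_k^2\Big)
\;\leq\; m\,\lambda_{\max}\!\Big(\exp\big(\tfrac{\theta^2}{2}\sum_k\Sb_k^2\big)\Big)
\;=\; m\,e^{\theta^2 v/2}.
\end{align*}
Substituting $\EE\,\tr e^{\theta\Yb}\leq m\,e^{\theta^2 v/2}$ into the two bounds above and optimizing over $\theta$ (at $\theta=\sqrt{2\log m/v}$ for the expectation and $\theta=t/v$ for the tail) gives the claimed estimates; undoing the dilation replaces $m$ by $d_1+d_2$, $v$ by $\mathcal{V}(\Zb)$, and $\lambda_{\max}(\Yb)$ by $\|\Zb\|_2$, and the expectation bound becomes $\EE\|\Zb\|_2\leq\sqrt{2\,\mathcal{V}(\Zb)\log(d_1+d_2)}$.

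The genuinely hard step is ingredient (ii): the subadditivity inequality rests on Lieb's concavity theorem (the concavity of $\Ab\mapsto\tr\exp(\Hb+\log\Ab)$ on the positive-definite cone), which is the single non-elementary tool and the only place real work is hidden; the dilation, the one-line Gaussian mgf, and the scalar optimization are all bookkeeping. As an alternative for the tail bound alone, one can bypass Lieb: the map $\gamma\mapsto\|\sum_k\gamma_k\Bb_k\|_2$ is $L$-Lipschitz on Euclidean space with $L^2\leq\mathcal{V}(\Zb)$ (by Cauchy--Schwarz applied to $\|\sum_k u_k\Bb_k x\|$ for unit vectors $u,x$, using both $\sum_k\Bb_k^\ts\Bb_k$ and $\sum_k\Bb_k\Bb_k^\ts$), so Gaussian concentration of measure yields a sub-Gaussian tail about the mean with variance proxy $\mathcal{V}(\Zb)$; together with the expectation bound this recovers a statement of the same shape, though matching the exact prefactor $d_1+d_2$ and the centering at $0$ in Theorem~\ref{thm:tropp_gaussian} is cleanest via the Laplace transform argument. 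I would present the Laplace-transform proof as primary and note the concentration argument only as a remark.
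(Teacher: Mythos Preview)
Your sketch is correct and is essentially the argument given in Tropp's monograph \cite{MAL-048}, from which the paper quotes this result verbatim. Note, however, that the paper itself does \emph{not} prove Theorem~\ref{thm:tropp_gaussian}: it is stated as Theorem~4.1.1 of \cite{MAL-048} and invoked as a black box in the proof of Lemma~\ref{lemma_gaussian_spectral_bound}, so there is no in-paper proof to compare against.
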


The following lemma bounds the largest singular value of Gaussian matrices, whose entries are independent, identically distributed ${\cal N}(0,1)$ random variables.

\begin{lemma}\label{lemma_gaussian_spectral_bound}
        Let $\Gb \in \R^{d \times d}$ be a random matrix with independent, identically distributed standard normal entries. Then, for all $t \geq 0$,
        \begin{align*}
            \PP\big(\|\Gb\|_2 \geq t\sqrt{d} \big) \leq 2d e^{-t^2/2}.
        \end{align*}
    \end{lemma}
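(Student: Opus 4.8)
The plan is to realize $\Gb$ as a Gaussian series and then invoke Theorem~\ref{thm:tropp_gaussian}. Writing $\eb_1,\dots,\eb_d$ for the standard basis vectors of $\R^d$, we have the exact decomposition $\Gb = \sum_{i,j=1}^d \gamma_{ij}\,\eb_i\eb_j^T$, where the $\gamma_{ij}$ are independent standard normal random variables; that is, $\Gb$ is a Gaussian series with fixed coefficient matrices $\Bb_{ij} = \eb_i\eb_j^T \in \R^{d\times d}$, and $d_1 = d_2 = d$.

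The one quantity I would need to compute is the matrix variance statistic $\mathcal{V}(\Gb)$. Since $\eb_j^T\eb_j = 1$, we get $\sum_{i,j=1}^d \Bb_{ij}\Bb_{ij}^T = \sum_{i,j=1}^d \eb_i(\eb_j^T\eb_j)\eb_i^T = d\sum_{i=1}^d \eb_i\eb_i^T = d\,\Ib_d$, and by symmetry the identical computation gives $\sum_{i,j=1}^d \Bb_{ij}^T\Bb_{ij} = d\,\Ib_d$ as well. Hence $\mathcal{V}(\Gb) = \max\{\|d\Ib_d\|_2, \|d\Ib_d\|_2\} = d$.

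Applying the tail bound of Theorem~\ref{thm:tropp_gaussian} with $d_1 + d_2 = 2d$ and $\mathcal{V}(\Gb) = d$ then yields, for every $s \geq 0$, $\PP(\|\Gb\|_2 \geq s) \leq 2d\,\exp(-s^2/(2d))$. Substituting $s = t\sqrt{d}$ gives $\PP(\|\Gb\|_2 \geq t\sqrt{d}) \leq 2d\,\exp(-t^2/2)$, which is exactly the claimed inequality.

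I do not expect any genuine obstacle here: the only real content is the elementary variance computation above, and the rest is a direct instantiation of Theorem~\ref{thm:tropp_gaussian}. One could alternatively obtain a qualitatively similar bound from sharper Gaussian concentration estimates (e.g.\ the Davidson--Szarek bound $\|\Gb\|_2 \le 2\sqrt{d} + s$ with probability at least $1 - 2e^{-s^2/2}$), but the Gaussian-series route produces precisely the stated form, with the prefactor $2d$ and the clean scaling in $t\sqrt{d}$ that is the convenient input for the proof of Theorem~\ref{thm:gaussian}.
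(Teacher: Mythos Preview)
Your proposal is correct and follows essentially the same approach as the paper: both express $\Gb$ as the Gaussian series $\sum_{i,j} g_{ij}\,\eb_i\eb_j^T$, compute the matrix variance statistic $\mathcal{V}(\Gb)=d$ via $\sum_{i,j}\eb_i\eb_j^T\eb_j\eb_i^T=d\,\Ib_d$, apply Theorem~\ref{thm:tropp_gaussian}, and then rescale $t\mapsto t\sqrt{d}$.
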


\begin{proof}
We will apply Theorem \ref{thm:tropp_gaussian}. 
First write $\Gb$ as a sum of outer products
\begin{align*}
    \Gb = \sum_{i,j=1}^d g_{ij} \eb_i \eb_j^T,
\end{align*}
where $g_{ij}$ are independent identically distributed standard normal variables and $\eb_i \in \mathbb{R}^d$, $1\leq i=1\leq d$, are the columns of the identity matrix
$\Ib\in\R^{d\times d}$.
In order to apply the aforementioned theorem we need to bound:
\begin{align*}
   \left \|\sum_{i,j=1}^d \eb_i \eb_j^T(\eb_i \eb_j^T)^T\right\|_2 = 
   \left\|\sum_{i,j=1}^d \eb_i \eb_i^T \right\|_2 =  \|d \cdot\Ib_d\|_2 = d.
\end{align*}
Similarly, $\|\sum_{i,j=1}^d (\eb_i \eb_j^T)^T\eb_i \eb_j^T\|_2 = d$. Hence in Theorem \ref{thm:tropp_gaussian} the parameter ${\mathcal V}(\Gb)$ is equal to $d$, and so,
\begin{align*}
\PP\big[\|\Gb\|_2 \geq t \big] \leq 2d e^{-t^2/2d}.
\end{align*}
Rescaling the parameter $t$ by a $\sqrt{d}$ factor concludes the proof.
\end{proof}

\section{Additional plots} \label{append_plots}

\begin{figure}[ht!]
    \begin{subfigure}[t]{0.45\textwidth}
    \centering
    \includegraphics[page=1, width=\linewidth]{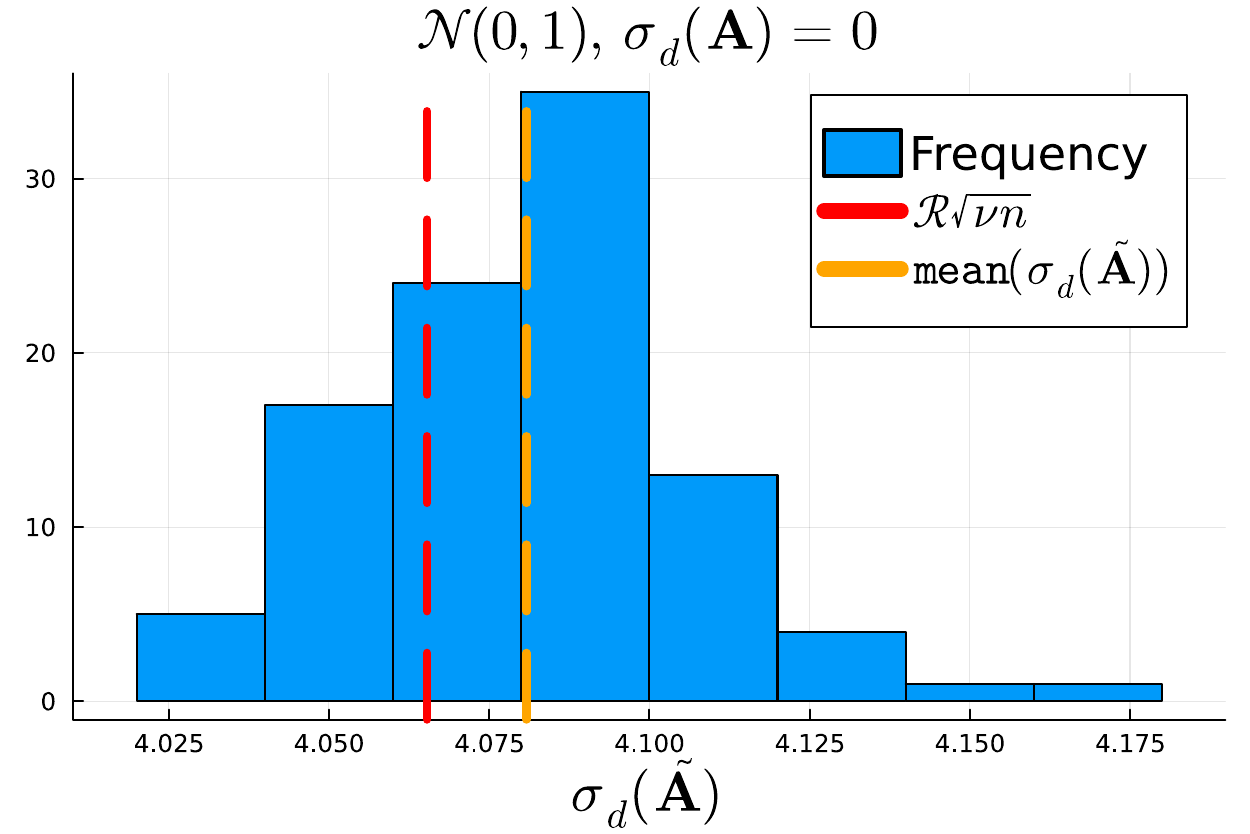}
    \caption{$p=1, \, d = 10$} \label{exp_fixed_rank_def_plot_1}
    \end{subfigure}
    \hfill
    \centering
    \begin{subfigure}[t]{0.45\textwidth}
        \centering
        \includegraphics[page=11, width=\linewidth]{new_plots/randn_10e4_0_all_reps_.pdf}
    \caption{$p=1, \, d = 1000$}  \label{exp_fixed_rank_def_plot_2}
    \end{subfigure}

    \begin{subfigure}[t]{0.45\textwidth}
    \centering
    \includegraphics[page=2, width=\linewidth]{new_plots/randn_10e4_0_all_reps_.pdf}
    \caption{$p=2, \, d = 10$}  \label{exp_fixed_rank_def_plot_3}
    \end{subfigure}
    \hfill
    \centering
    \begin{subfigure}[t]{0.45\textwidth}
        \centering
        \includegraphics[page=12, width=\linewidth]{new_plots/randn_10e4_0_all_reps_.pdf}
    \caption{$p=2, \, d = 1000$} \label{exp_fixed_rank_def_plot_4}
    \end{subfigure}

     \begin{subfigure}[t]{0.45\textwidth}
    \centering
    \includegraphics[page=3, width=\linewidth]{new_plots/randn_10e4_0_all_reps_.pdf}
    \caption{$p=3, \, d = 10$}  \label{exp_fixed_rank_def_plot_5}
    \end{subfigure}
    \hfill
    \centering
    \begin{subfigure}[t]{0.45\textwidth}
        \centering
        \includegraphics[page=13, width=\linewidth]{new_plots/randn_10e4_0_all_reps_.pdf}
    \caption{$p=3, \, d = 1000$} \label{exp_fixed_rank_def_plot_6}
    \end{subfigure}

\caption{The elements of  $\Ab$ are random variables in
$\mathcal{N}(0,1)$ with $\sigma_d(\Ab)=0$. The stochastically rounded $\Abtil$ 
has elements in $\mathcal{F}^{{p}}$, for $p = 1, 2, 3$. The horizontal axis represents the values of $\sigma_d(\Abtil)$ over 100 runs, grouped into at most 10 bins. The vertical axis represents the number of $\sigma_d(\Abtil)$ in each bin. The
\textcolor{orange}{orange dashed vertical line} represents the average value of $\sigma_d(\Abtil)$, while the \textcolor{red}{red dashed vertical line} represents the lower bound estimate~(\ref{eq:maineqapprox}). 
Each panel corresponds to a different combination of $p$ and $d$. In each row, the precision~$p$ is fixed, while the column dimension~$d$ varies.} \label{exp_fixed_rank_def_plot}
\end{figure}

\begin{figure}[ht!]
    \begin{subfigure}[t]{0.45\textwidth}
    \centering
    \includegraphics[page=1, width=\linewidth]{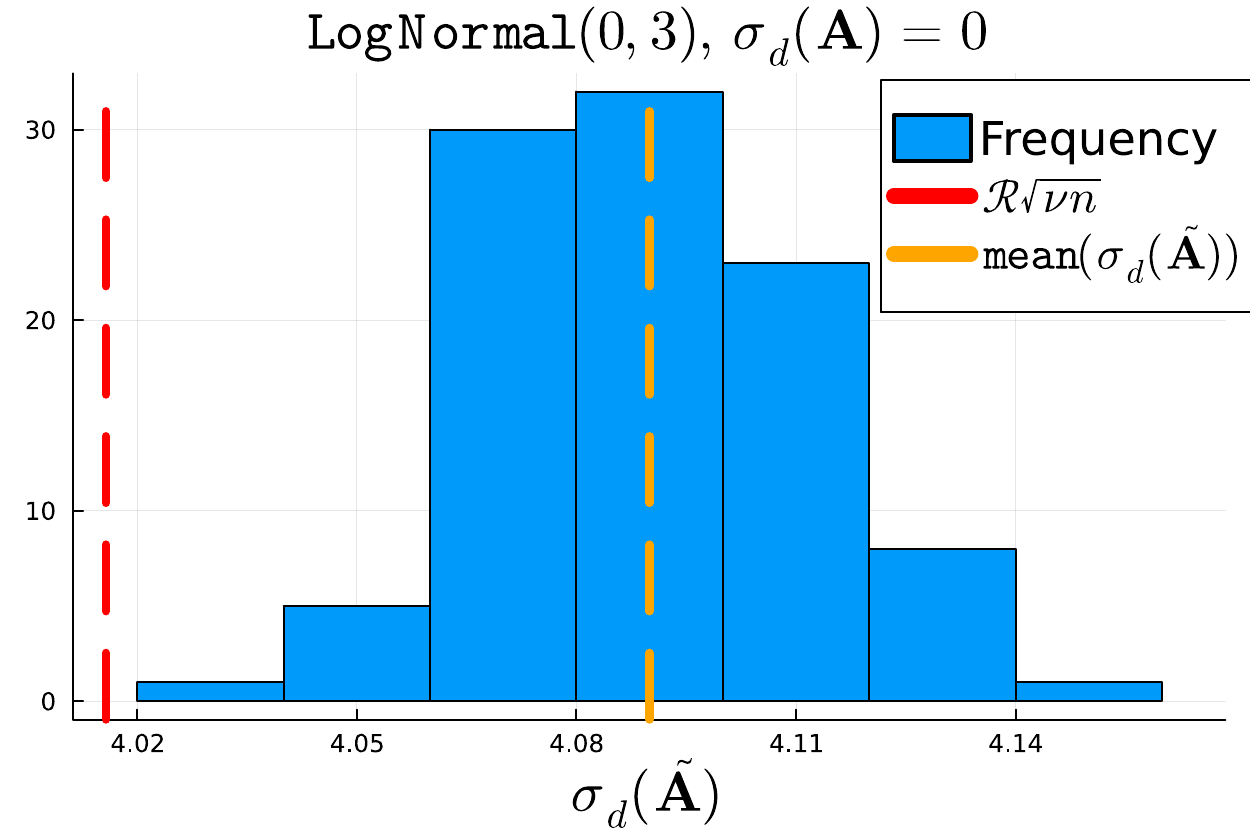}
    \caption{$p=1, \, d = 10$} \label{exp_fixed_rank_def_lognorm_plot_1}
    \end{subfigure}
    \hfill
    \centering
    \begin{subfigure}[t]{0.45\textwidth}
        \centering
        \includegraphics[page=11, width=\linewidth]{new_plots/lognorm_10e4_0_all_reps_mean_0_var_3_.pdf}
    \caption{$p=1, \, d = 1000$}  \label{exp_fixed_rank_def_lognorm_plot_2}
    \end{subfigure}

    \begin{subfigure}[t]{0.45\textwidth}
    \centering
    \includegraphics[page=2, width=\linewidth]{new_plots/lognorm_10e4_0_all_reps_mean_0_var_3_.pdf}
    \caption{$p=2, \, d = 10$}  \label{exp_fixed_rank_def_lognorm_plot_3}
    \end{subfigure}
    \hfill
    \centering
    \begin{subfigure}[t]{0.45\textwidth}
        \centering
        \includegraphics[page=12, width=\linewidth]{new_plots/lognorm_10e4_0_all_reps_mean_0_var_3_.pdf}
    \caption{$p=2, \, d = 1000$} \label{exp_fixed_rank_def_lognorm_plot_4}
    \end{subfigure}

     \begin{subfigure}[t]{0.45\textwidth}
    \centering
    \includegraphics[page=3, width=\linewidth]{new_plots/lognorm_10e4_0_all_reps_mean_0_var_3_.pdf}
    \caption{$p=3, \, d = 10$}  \label{exp_fixed_rank_def_lognorm_plot_5}
    \end{subfigure}
    \hfill
    \centering
    \begin{subfigure}[t]{0.45\textwidth}
        \centering
        \includegraphics[page=13, width=\linewidth]{new_plots/lognorm_10e4_0_all_reps_mean_0_var_3_.pdf}
    \caption{$p=3, \, d = 1000$} \label{exp_fixed_rank_def_lognorm_plot_6}
    \end{subfigure}

\caption{The matrices are initially drawn from a log-normal distribution with the smallest singular value set to 0, and stochastically rounded to $\mathcal{F}^{{p}}$, for $p = 1, \ldots, 3$. The horizontal axis represents the distribution of $\sigma_d(\Abtil)$ over 100 repetitions, grouped into up to 10 bins. The vertical axis shows the frequency with which each $\sigma_d(\Abtil)$ appears in each bin. The
\textcolor{orange}{orange dashed vertical line} represents the average value of $\sigma_d(\Abtil)$, while the \textcolor{red}{red dashed vertical line} represents the lower bound estimate~(\ref{eq:maineqapprox}).
Each panel corresponds to a different combination of $p$ and $d$. In each row, the precision~$p$ is fixed, while the column dimension~$d$ varies.} \label{exp_fixed_rank_def_lognorm_plot}

\end{figure}

\begin{figure}[ht!]
    \begin{subfigure}[t]{0.45\textwidth}
    \centering
    \includegraphics[page=1, width=\linewidth]{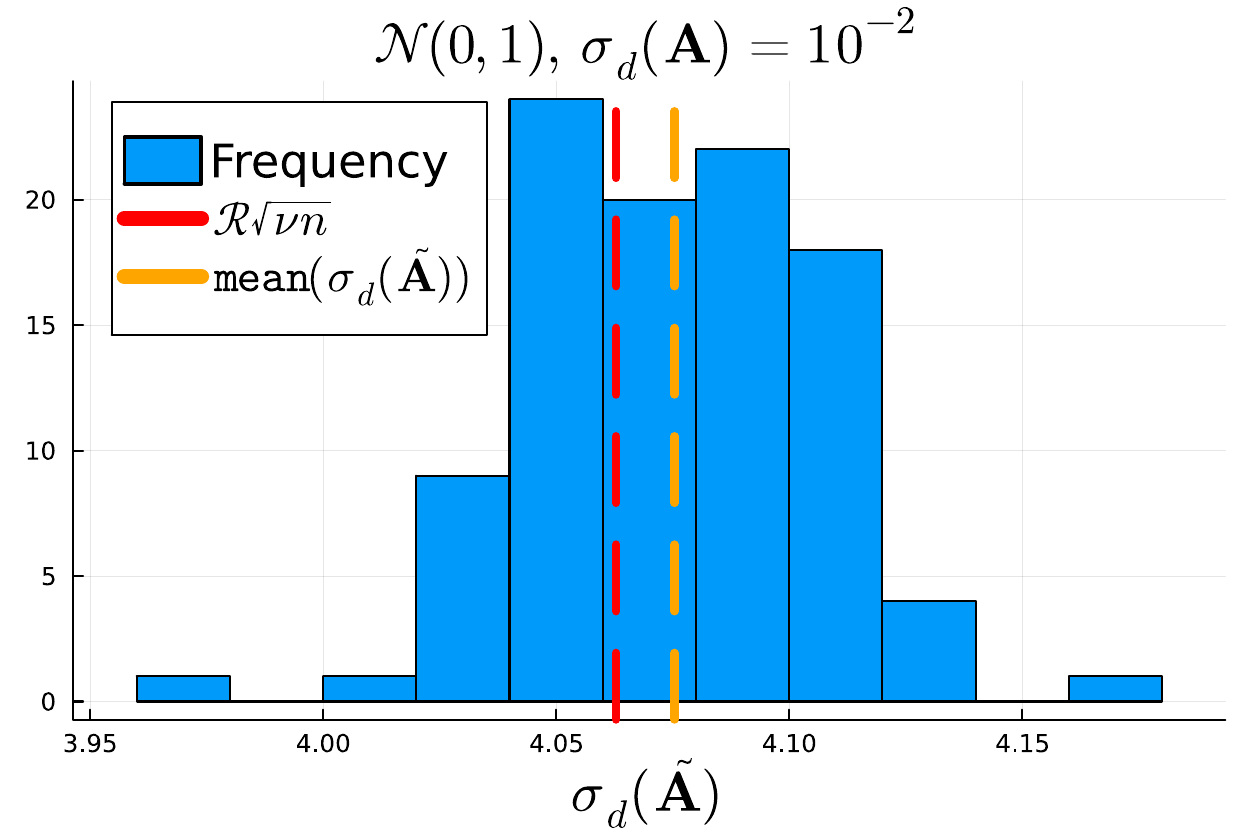}
    \caption{$p=1, \, d = 10$} \label{exp_fixed_rank_full_1}
    \end{subfigure}
    \hfill
    \centering
    \begin{subfigure}[t]{0.45\textwidth}
        \centering
        \includegraphics[page=11, width=\linewidth]{new_plots/randn_10e4_10e-2_all_reps_.pdf}
    \caption{$p=1, \, d = 1000$}  \label{exp_fixed_rank_full_2}
    \end{subfigure}

    \begin{subfigure}[t]{0.45\textwidth}
    \centering
    \includegraphics[page=2, width=\linewidth]{new_plots/randn_10e4_10e-2_all_reps_.pdf}
    \caption{$p=2, \, d = 10$}  \label{exp_fixed_rank_full_3}
    \end{subfigure}
    \hfill
    \centering
    \begin{subfigure}[t]{0.45\textwidth}
        \centering
        \includegraphics[page=12, width=\linewidth]{new_plots/randn_10e4_10e-2_all_reps_.pdf}
    \caption{$p=2, \, d = 1000$} \label{exp_fixed_rank_full_4}
    \end{subfigure}

     \begin{subfigure}[t]{0.45\textwidth}
    \centering
    \includegraphics[page=3, width=\linewidth]{new_plots/randn_10e4_10e-2_all_reps_.pdf}
    \caption{$p=3, \, d = 10$}  \label{exp_fixed_rank_full_5}
    \end{subfigure}
    \hfill
    \centering
    \begin{subfigure}[t]{0.45\textwidth}
        \centering
        \includegraphics[page=13, width=\linewidth]{new_plots/randn_10e4_10e-2_all_reps_.pdf}
    \caption{$p=3, \, d = 1000$} \label{exp_fixed_rank_full_6}
    \end{subfigure}

\caption{The matrices are initially drawn from a standard normal distribution with the smallest singular value set to $10^{-2}$, and stochastically rounded to $\mathcal{F}^{{p}}$, for $p = 1, \ldots, 3$. The horizontal axis represents the distribution of $\sigma_d(\Abtil)$ over 100 repetitions, grouped into up to 10 bins. The vertical axis shows the frequency with which each $\sigma_d(\Abtil)$ appears in each bin. The \textcolor{orange}{orange dashed vertical line} represents the average value of $\sigma_d(\Abtil)$, while the \textcolor{red}{red dashed vertical line} represents the lower bound estimate~(\ref{eq:maineqapprox}). 
Each panel corresponds to a different combination of $p$ and $d$. In each row, the precision~$p$ is fixed, while the column dimension~$d$ varies.} \label{exp_fixed_rank_full_plot}

\end{figure}

\begin{figure}[ht!]
    \begin{subfigure}[t]{0.45\textwidth}
    \centering
    \includegraphics[page=1, width=\linewidth]{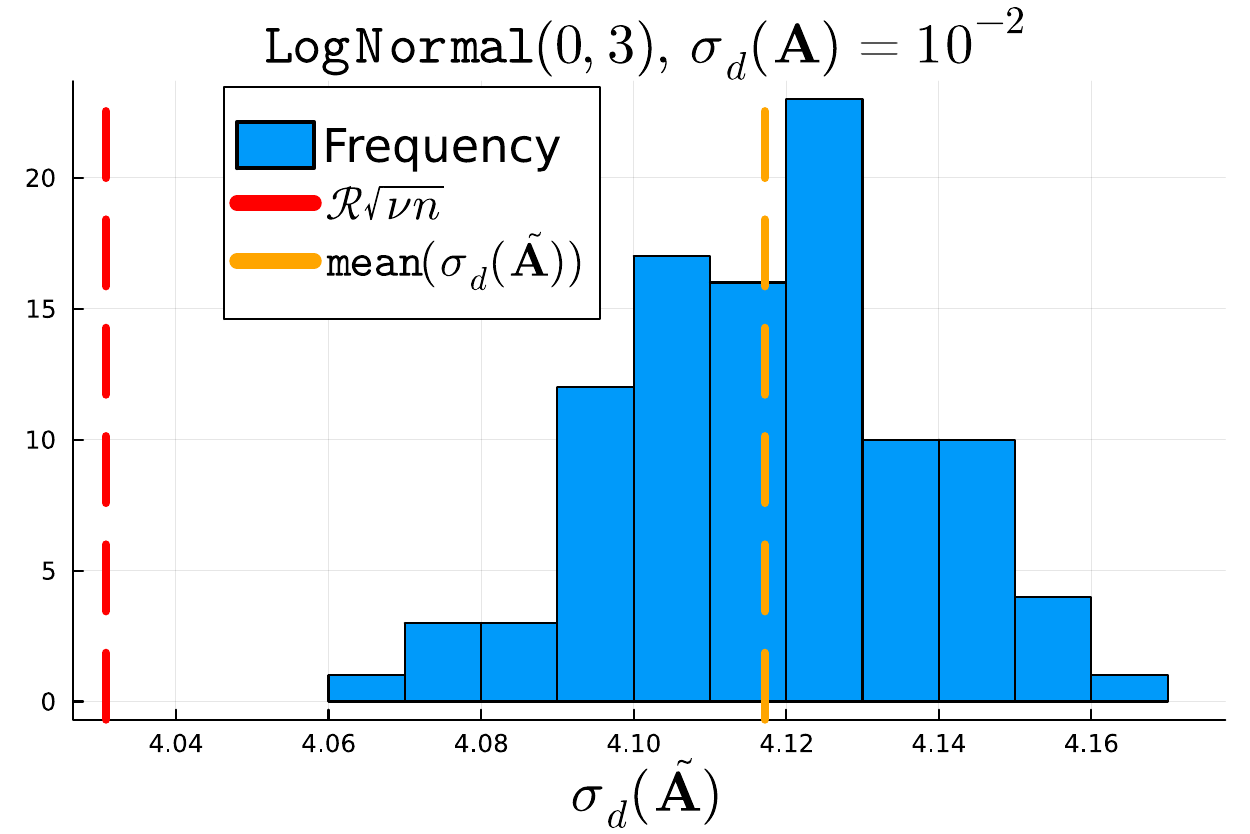}
    \caption{$p=1, \, d = 10$} \label{exp_fixed_rank_full_lognorm_plot_1}
    \end{subfigure}
    \hfill
    \centering
    \begin{subfigure}[t]{0.45\textwidth}
        \centering
        \includegraphics[page=11, width=\linewidth]{new_plots/lognorm_10e4_10e-2_all_reps_mean_0_var_3_.pdf}
    \caption{$p=1, \, d = 1000$}  \label{exp_fixed_rank_full_lognorm_plot_2}
    \end{subfigure}

    \begin{subfigure}[t]{0.45\textwidth}
    \centering
    \includegraphics[page=2, width=\linewidth]{new_plots/lognorm_10e4_10e-2_all_reps_mean_0_var_3_.pdf}
    \caption{$p=2, \, d = 10$}  \label{exp_fixed_rank_full_lognorm_plot_3}
    \end{subfigure}
    \hfill
    \centering
    \begin{subfigure}[t]{0.45\textwidth}
        \centering
        \includegraphics[page=12, width=\linewidth]{new_plots/lognorm_10e4_10e-2_all_reps_mean_0_var_3_.pdf}
    \caption{$p=2, \, d = 1000$} \label{exp_fixed_rank_full_lognorm_plot_4}
    \end{subfigure}

     \begin{subfigure}[t]{0.45\textwidth}
    \centering
    \includegraphics[page=3, width=\linewidth]{new_plots/lognorm_10e4_10e-2_all_reps_mean_0_var_3_.pdf}
    \caption{$p=3, \, d = 10$}  \label{exp_fixed_rank_full_lognorm_plot_5}
    \end{subfigure}
    \hfill
    \centering
    \begin{subfigure}[t]{0.45\textwidth}
        \centering
        \includegraphics[page=13, width=\linewidth]{new_plots/lognorm_10e4_10e-2_all_reps_mean_0_var_3_.pdf}
    \caption{$p=3, \, d = 1000$} \label{exp_fixed_rank_full_lognorm_plot_6}
    \end{subfigure}

\caption{The matrices are initially drawn from a log-normal distribution with the smallest singular value set to $10^{-2}$, and stochastically rounded to $\mathcal{F}^{{p}}$, for $p = 1, \ldots, 3$. The horizontal axis represents the distribution of $\sigma_d(\Abtil)$ over 100 repetitions, grouped into up to 10 bins. The vertical axis shows the frequency with which each $\sigma_d(\Abtil)$ appears in each bin. The \textcolor{orange}{orange dashed vertical line} represents the average value of $\sigma_d(\Abtil)$, while the \textcolor{red}{red dashed vertical line} represents the lower bound estimate~(\ref{eq:maineqapprox}). 
Each panel corresponds to a different combination of $p$ and $d$. In each row, the precision~$p$ is fixed, while the column dimension~$d$ varies.} \label{exp_fixed_rank_full_lognorm_plot}

\end{figure}

\begin{figure}[ht!]
    \begin{subfigure}[t]{0.45\textwidth}
    \centering
    \includegraphics[page=1, width=\linewidth]{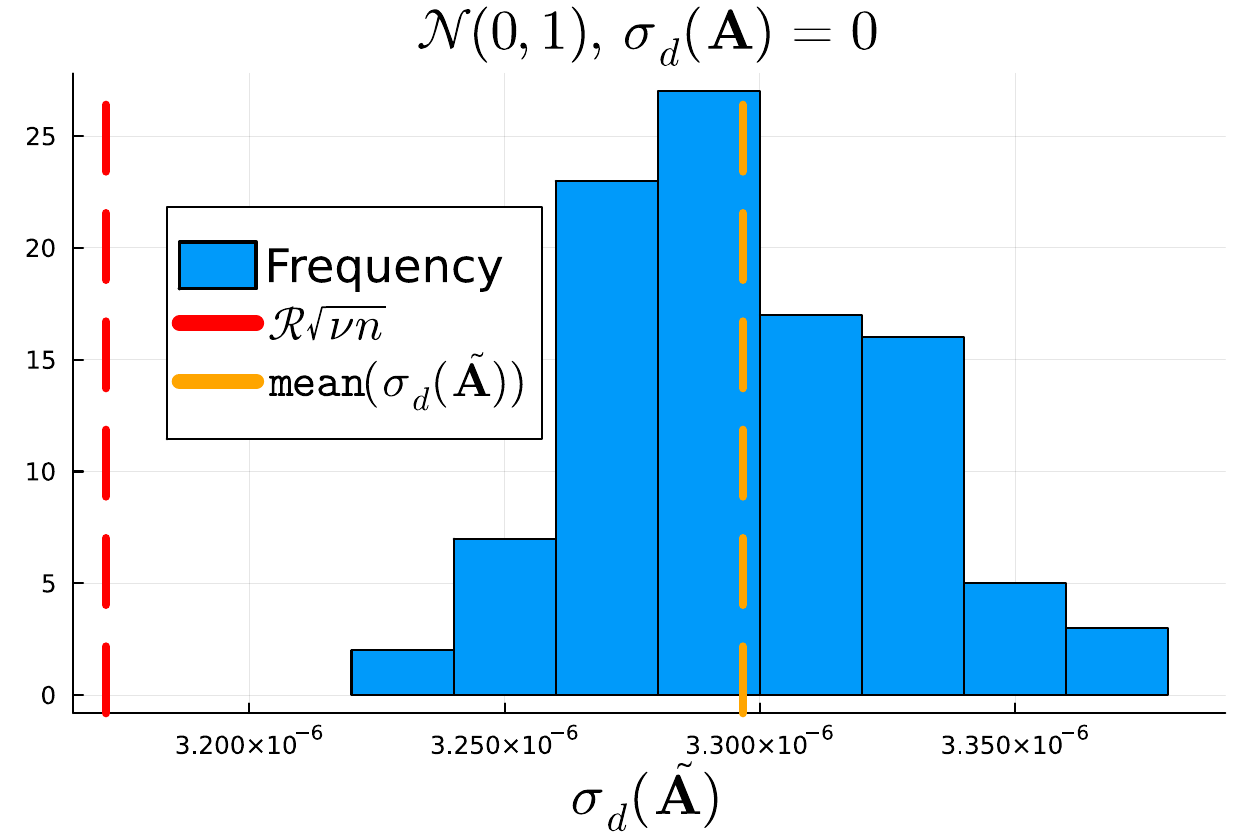}
    \caption{$d = 10$} \label{exp_fl_rank_def_plot_1}
    \end{subfigure}
    \hfill
    \centering
    \begin{subfigure}[t]{0.45\textwidth}
        \centering
        \includegraphics[page=1, width=\linewidth]{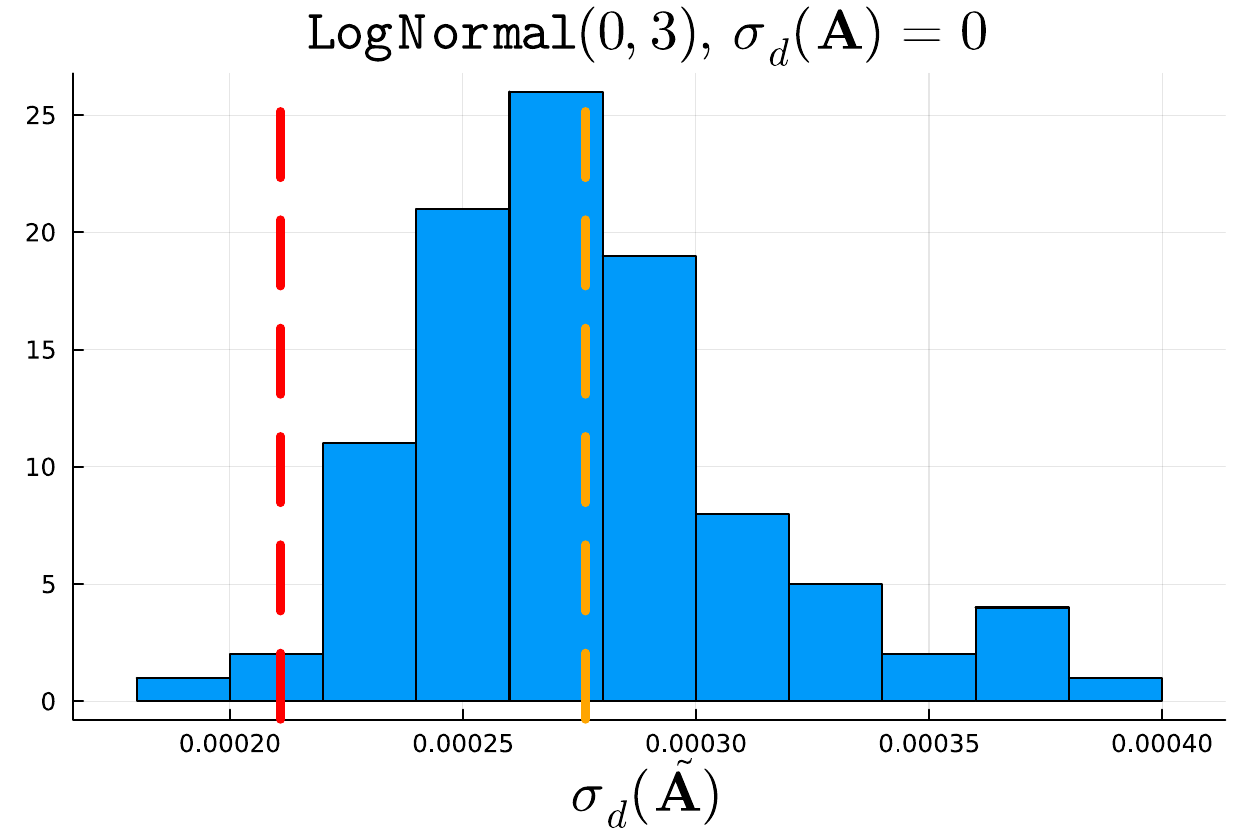}
    \caption{$d = 10$}  \label{exp_fl_rank_def_plot_2}
    \end{subfigure}

    \begin{subfigure}[t]{0.45\textwidth}
    \centering
    \includegraphics[page=2, width=\linewidth]{new_plots/randn_fl.pdf}
    \caption{$d = 100$}  \label{exp_fl_rank_def_plot_3}
    \end{subfigure}
    \hfill
    \centering
    \begin{subfigure}[t]{0.45\textwidth}
        \centering
        \includegraphics[page=2, width=\linewidth]{new_plots/randn_fl_lognorm.pdf}
    \caption{$d = 100$} \label{exp_fl_rank_def_plot_4}
    \end{subfigure}

     \begin{subfigure}[t]{0.45\textwidth}
    \centering
    \includegraphics[page=3, width=\linewidth]{new_plots/randn_fl.pdf}
    \caption{$d = 1000$}  \label{exp_fl_rank_def_plot_5}
    \end{subfigure}
    \hfill
    \centering
    \begin{subfigure}[t]{0.45\textwidth}
        \centering
        \includegraphics[page=3, width=\linewidth]{new_plots/randn_fl_lognorm.pdf}
    \caption{$d = 1000$} \label{exp_fl_rank_def_plot_6}
    \end{subfigure}

\caption{The matrices are initially drawn from both a standard normal and a log-norm distribution with the smallest singular value set to 0, and stochastically rounded to single precision. The horizontal axis represents the distribution of $\sigma_d(\Abtil)$ over 100 repetitions, grouped into up to 10 bins. The vertical axis shows the frequency with which each $\sigma_d(\Abtil)$ appears in each bin. The
\textcolor{orange}{orange dashed vertical line} represents the average value of $\sigma_d(\Abtil)$, while the \textcolor{red}{red dashed vertical line} represents the lower bound estimate~(\ref{eq:maineqapprox}).
Each panel corresponds to a different combination of the initial distribution and $d$. In each row, column dimension $d$ is fixed, while the distribution varies.} \label{exp_fl_rank_def_plot}

\end{figure}

\begin{figure}[ht!]
    \begin{subfigure}[t]{0.45\textwidth}
    \centering
    \includegraphics[page=1, width=\linewidth]{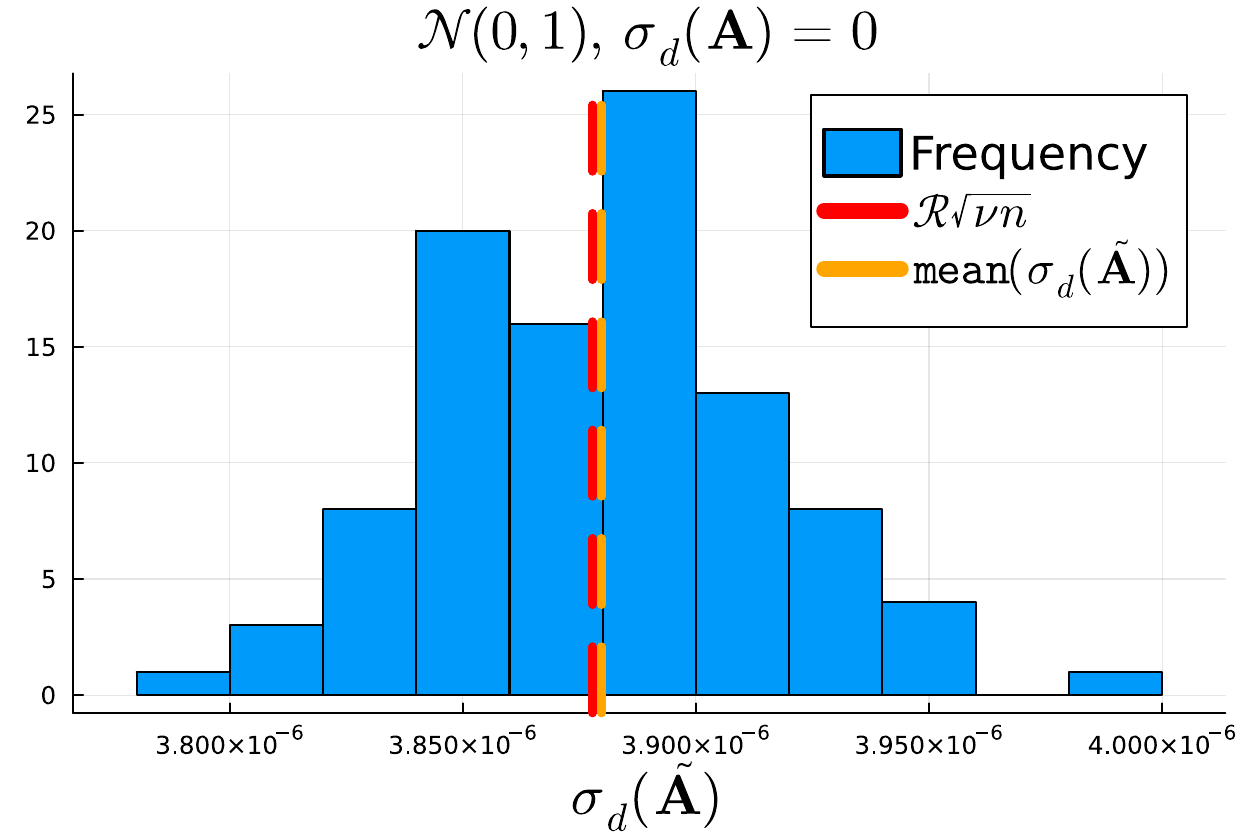}
    \caption{$\Ab^{h}, \,d = 10$} \label{exp_fl_nu_plot_1}
    \end{subfigure}
    \hfill
    \centering
    \begin{subfigure}[t]{0.45\textwidth}
        \centering
        \includegraphics[page=1, width=\linewidth]{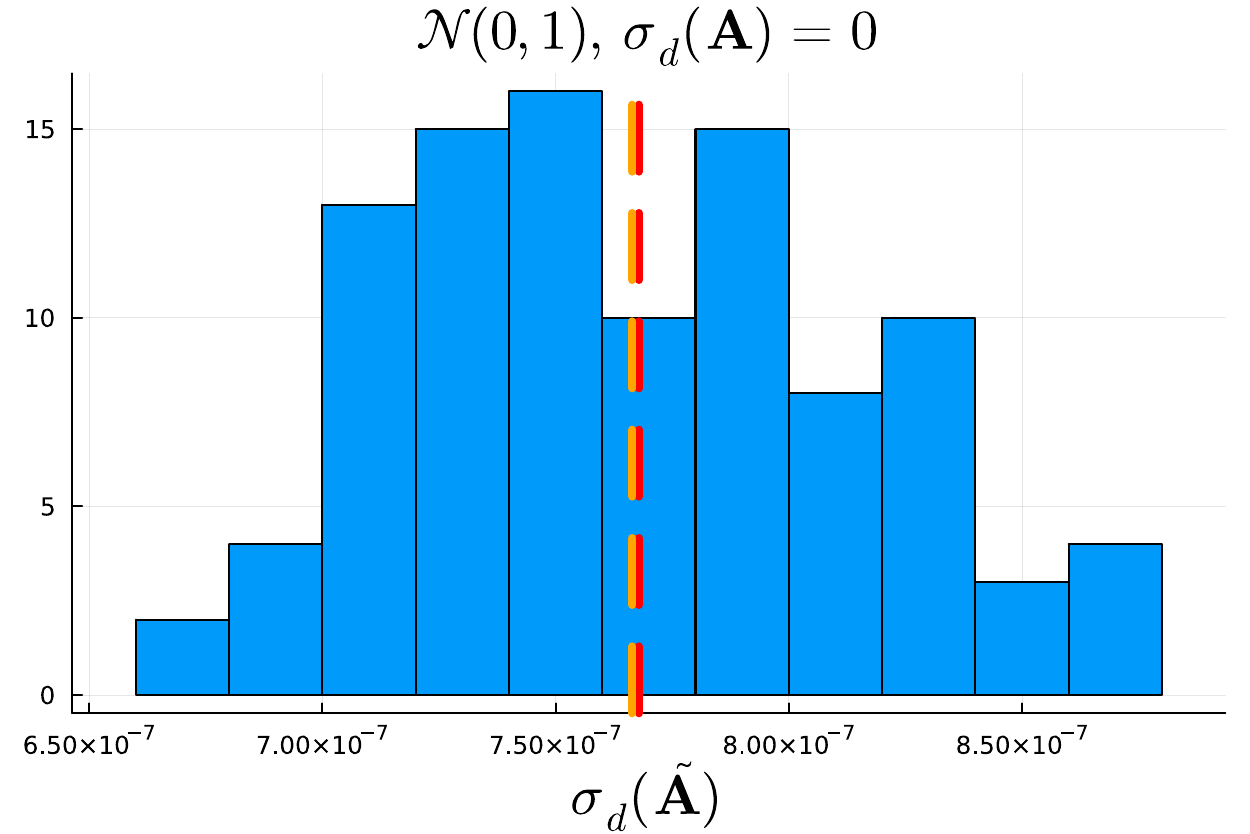}
    \caption{$\Ab^{l}, d = 10$}  \label{exp_fl_nu_plot_2}
    \end{subfigure}

    \begin{subfigure}[t]{0.45\textwidth}
    \centering
    \includegraphics[page=2, width=\linewidth]{new_plots/randn_fl_nu_h}
    \caption{$\Ab^{h}, d = 100$}  \label{exp_fl_nu_plot_3}
    \end{subfigure}
    \hfill
    \centering
    \begin{subfigure}[t]{0.45\textwidth}
        \centering
        \includegraphics[page=2, width=\linewidth]{new_plots/randn_fl_nu_l}
    \caption{$\Ab^{l}, d = 100$} \label{exp_fl_nu_plot_4}
    \end{subfigure}

     \begin{subfigure}[t]{0.45\textwidth}
    \centering
    \includegraphics[page=3, width=\linewidth]{new_plots/randn_fl_nu_h}
    \caption{$\Ab^{h}, d = 1000$}  \label{exp_fl_nu_plot_5}
    \end{subfigure}
    \hfill
    \centering
    \begin{subfigure}[t]{0.45\textwidth}
        \centering
        \includegraphics[page=3, width=\linewidth]{new_plots/randn_fl_nu_l}
    \caption{$\Ab^{l}, d = 1000$} \label{exp_fl_nu_plot_6}
    \end{subfigure}

\caption{The matrices are initially drawn from a standard normal distribution with the smallest singular value equal to 0, and stochastically rounded to single precision. The horizontal axis represents the distribution of $\sigma_d(\Abtil)$ over 100 repetitions, grouped into up to 10 bins. The vertical axis shows the frequency with which each $\sigma_d(\Abtil)$ appears in each bin. The
\textcolor{orange}{orange dashed vertical line} represents the average value of $\sigma_d(\Abtil)$, while the \textcolor{red}{red dashed vertical line} represents the lower bound estimate~(\ref{eq:maineqapprox}).
Each panel corresponds to a different combination of 'high' or 'low' $\nu$ and $d$. In each row, the column dimension $d$ is fixed, while the value of $\nu$ varies.} \label{exp_fl_nu_plot}

\end{figure}

\end{document}